\theoremstyle{plain}
\newtheorem{theorem}{Theorem}[section]
\theoremstyle{definition}
\theoremstyle{remark}
\title{Equivariant Spatio-Temporal Attentive Graph Networks to Simulate Physical Dynamics}
\author{Liming Wu$^{1,2}$\thanks{Equal contribution.}, Zhichao Hou$^{3}$\footnotemark[1], Jirui Yuan$^4$, Yu Rong$^5$, Wenbing Huang$^{1,2}$\thanks{Corresponding author.}\\ 
  $^1$Gaoling School of Artificial Intelligence, Renmin University of China \\ $^2$Beijing Key Laboratory of Big Data Management and Analysis Methods, Beijing, China \\ $^3$Department of Computer Science, North Carolina State University \\ $^4$Institute for AI Industry Research (AIR), Tsinghua University \, $^5$Tencent AI Lab  \\
  \texttt{\{manliowu\!,hwenbing\}@ruc\!.\!edu\!.\!cn}\ \ 
\texttt{zhou4@ncsu\!.\!edu} \\
\texttt{yu.rong@hotmail\!.\!com} \ \
\texttt{yuanjirui@air\!.\!tsinghua\!.\!edu\!.\!cn}
}
\begin{document}

\maketitle

\begin{abstract}
Learning to represent and simulate the dynamics of physical systems is a crucial yet challenging task.
Existing equivariant Graph Neural Network (GNN) based methods have encapsulated the symmetry of physics, \emph{e.g.}, translations, rotations, etc, leading to better generalization ability. Nevertheless, their frame-to-frame formulation of the task overlooks the non-Markov property mainly incurred by unobserved dynamics in the environment. In this paper, we reformulate dynamics simulation as a spatio-temporal prediction task, by employing the trajectory in the past period to recover the Non-Markovian interactions. We propose Equivariant Spatio-Temporal Attentive Graph Networks (ESTAG), an equivariant version of spatio-temporal GNNs, to fulfill our purpose. At its core, we design a novel Equivariant Discrete Fourier Transform (EDFT) to extract periodic patterns from the history frames, and then construct an Equivariant Spatial Module (ESM) to accomplish spatial message passing, and an Equivariant Temporal Module (ETM) with the forward attention and equivariant pooling mechanisms to aggregate temporal message. We evaluate our model on three real datasets corresponding to the molecular-, protein- and macro-level. Experimental results verify the effectiveness of ESTAG compared to typical spatio-temporal GNNs and equivariant GNNs. 

\end{abstract}

\section{Introduction}

It has been a goal to represent and simulate the dynamics of physical systems by making use of machine learning techniques~\citep{tenenbaum2011grow,ding2021dynamic,gan2020threedworld}. The related studies, once pushed forward, have great potential to facilitate a variety of downstream scientific tasks including Molecular Dynamic (MD) simulation~\citep{hansson2002molecular}, protein structure prediction~\citep{al2001protein}, virtual screening of drugs and materials~\citep{reddy2007virtual}, model-based robot planning/control~\citep{spong2006robot}, and many others. 

Plenty of solutions have been proposed, amongst which the usage of Graph Neural Networks (GNNs)~\citep{wu2020comprehensive} becomes one of the most desirable directions. GNNs naturally model particles or unit elements as nodes, physical relations as edges, and the latent interactions as the message passing thereon. More recently, a line of researches~\citep{thomas2018tensor,fuchs2020se,hutchinson2021lietransformer,finzi2020generalizing,satorras2021n,huangequivariant} have been concerned with generalizing GNNs to fit the symmetry of our physical world. These works, also known as equivariant GNNs~\citep{han2022geometrically}, ensure that translating/rotating/reflecting the geometric input of GNNs results in the output transformed in the same way. By handcrafting such Euclidean equivariance, the models are well predictable to scenarios under arbitrary coordinate systems, giving rise to enhanced generalization ability. 

In spite of the fruitful progress, existing methods overlook a vital point: \emph{the observed physical dynamics are almost non-Markovian}. In previous methods, they usually take as input the conformation of a system at a single temporal frame and predict as output the future conformation after a fixed time interval, forming a frame-to-frame forecasting problem. Under the Markovian assumption, this setting is pardonable as future frames are independent of all other past frames given the input one. Nevertheless, the Markovian assumption is rather unrealistic when there are other unobserved objects interacting with the system we are simulating~\citep{vlachas2021accelerated}. For instance, when we consider simulating the dynamics of a protein that is interacting with an unobserved solvent (such as water), the Markovian property no longer holds; in other words, even conditional on the current frame, the future dynamics of the protein depends on the current state of the solvent which, however, is influenced by the past states of the protein itself, owing to the interaction between the protein and the solvent. The improper Markovian assumption makes current works immature in dynamics modeling.

To relieve from the Markovian assumption, this paper proposes to employ the states in the past period to reflect the latent and unobserved dynamics. In principle, we can recover the non-Markovian behavior (\emph{e.g.} interacting with a solvent) if the past period is sufficiently long. We collect a period of past system states as spatio-temporal graphs, and utilize them as the input to formulate a spatio-temporal prediction task, other than the frame-to-frame problem as usual (see Figure~\ref{problem}). This motivates us to leverage existing Spatio-Temporal GNNs (STGNNs)~\citep{yu2018spatio} to fulfill our purpose, which, unfortunately, are unable to conform to the aforementioned Euclidean symmetry and the underlying physical laws. Hence, the equivariant version of STGNNs is no doubt in demand. Another point is that periodic motions are frequently observed in typical physical systems~\citep{broer2009quasi}. For example, the Aspirin molecular exhibits clear periodic thermal vibration when binding to a target protein. Under the spatio-temporal setting, we are able to model periodicity of dynamics, which, however, is less investigated before.  

\begin{figure}[t]
\vskip 0.0in
\centering
\includegraphics[scale=0.4]{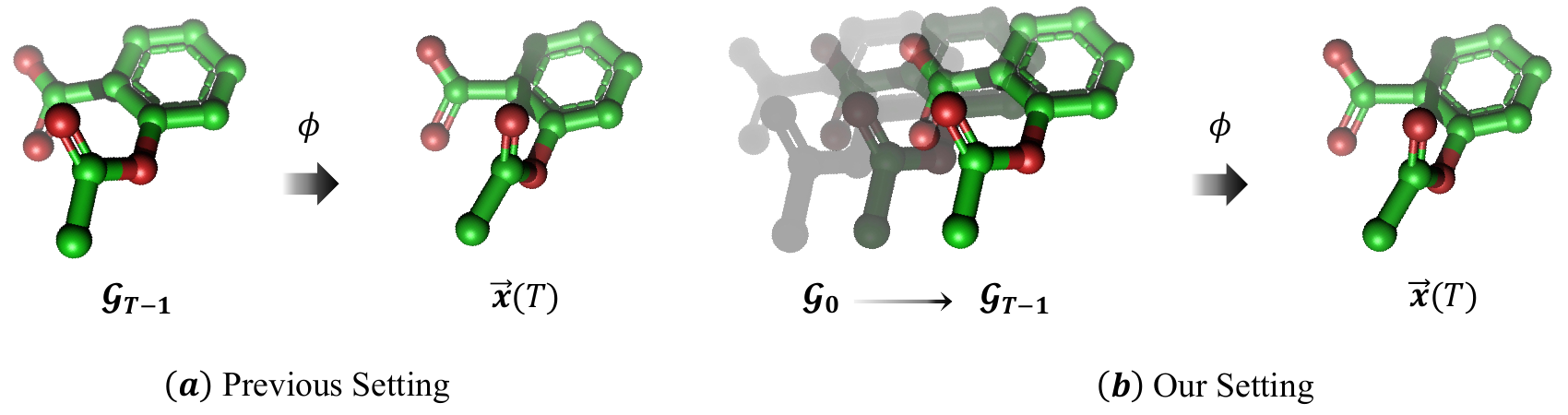}
\caption{Comparison of the problem setting between previous methods and our paper. Here, we choose the dynamics of the Aspirin molecular with time lag as 1 for illustration.}
\label{problem}
\vskip -0.2in
\end{figure}

Our contributions are summarized as follows:
\begin{itemize}
    \item We reveal the non-Markov behavior in physical dynamics simulation by developing equivariant spatio-temporal graph models. The proposed model dubbed Equivariant Spatio-Temporal Attentive Graph network (ESTAG) conforms to Euclidean symmetry and alleviates the limitation of the Markovian assumption.     
    \item We design a novel Equivariant Discrete Fourier Transform (EDFT) to extract periodic features from the dynamics, and then construct an Equivariant Spatial Module (ESM), and an Equivariant Temporal Module (ETM) with forward attention and equivariant pooling, to process spatial and temporal message passing, respectively.
    \item The effectiveness of ESTAG is verified on three real datasets corresponding to the molecular-, protein- and macro-level.
    We prove that, involving both temporal memory and equivariance are advantageous compared to typical STGNNs and equivariant GNNs with adding trivial spatio-temporal aggregation. 
\end{itemize}

\begin{figure*}[t!]
\centering
\includegraphics[scale=0.6]{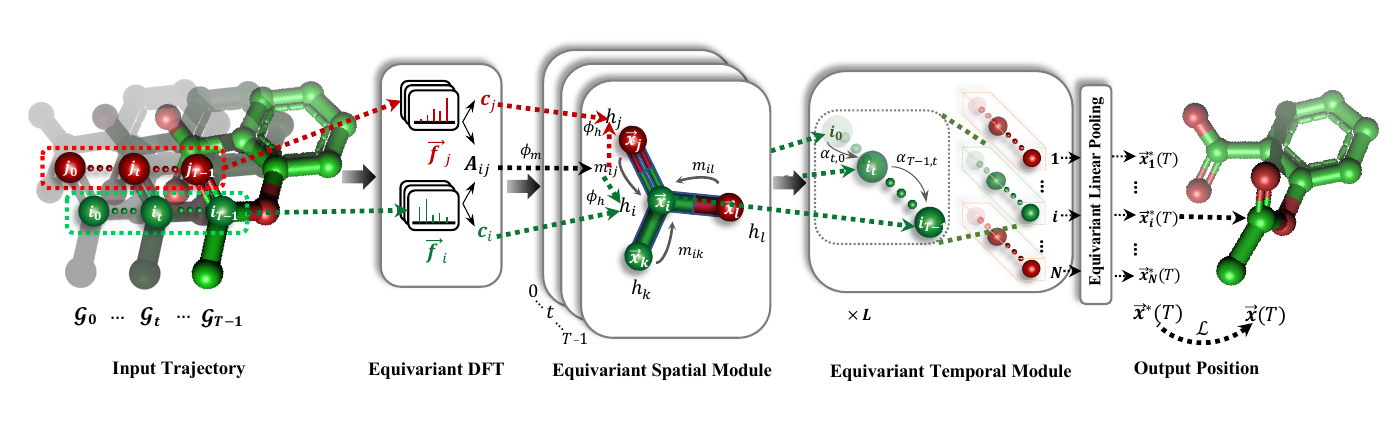}
\vskip -0.1in
\caption{Schematic overview of ESTAG. After inputting historical graph trajectories $\gG_0,...,\gG_{T-1}$,  Equivariant Discrete Fourier Transform (EDFT) extracts equivariant frequency features $\vec{\vf}$ from the trajectory. We process them into the invariant node-wise feature $\vc$ and adjacency matrix $\mA$ to be adopted for the next stage. Then we stack Equivariant Spatial Module (ESM) and Equivariant Temporal Module (ETM) alternatively for $L$ times to explore spatial and temporal dependencies. After the equivariant temporal pooling layer, we obtain the estimated position $\vec{\vx}^*(T)$.}
\label{estag_pipeline}
\vskip -0.15in
\end{figure*}

\section{Related Work}
\textbf{GNNs for Physical Dynamics Modeling} 
Graph Neural Networks (GNNs) have shown great potential in physical dynamics modeling.
IN \cite{battaglia2016interaction}, NRI \cite{kipf2018neural}, and HRN \cite{mrowca2018flexible} are a series of works to learn physical system interaction and evolution. Considering the energy conservation and incorporating physical prior knowledge into GNNs,
HNN \cite{greydanus2019hamiltonian}, HOGN \cite{sanchez2019hamiltonian} leverage ODE and Hamiltonian mechanics to capture the interactions in the systems. However, all of the above-mentioned models don't take into account the underlying symmetries of a system. In order to introduce Euclidean equivariance, Tensor-Field networks (TFN) \cite{thomas2018tensor} and SE(3)-Transformer \cite{fuchs2020se} equip filters with rotation equivariance by irreducible representation of the SO(3) group.
LieTransformer \cite{hutchinson2021lietransformer}and LieConv \cite{finzi2020generalizing} leverage the Lie group to enforce equivariance. 
Besides, EGNN \cite{satorras2021n} utilized a simpler E(n)-equivariant framework  which can achieve competitive results without computationally expensive information. Based on EGNN, GMN \cite{huangequivariant} further proposes an equivariant and constraint-aware architecture by making use of forward kinematics information in physical system.
Nevertheless, all of these methods 
ignore the natural spatiotemporal patterns of physical dynamics and model it as a frame-to-frame forecasting problem.

\textbf{Spatio-temporal graph neural networks} STGNNs \cite{jin2023spatio} aim to capture the spatial and temporal dependency simultaneously and are widely investigated in various applications like traffic forecasting and human recognition. 
DCRNN \cite{li2017diffusion} and GaAN \cite{zhang2018gaan} are RNN-based methods which filter inputs and hidden states passed to recurrent unit using graph convolutions.
However, RNN-based approaches are  time-consuming and may suffer from gradient explosion/vanishing problem.
CNN-based approaches, such as STGCN \cite{yu2018spatio} and ST-GCN \cite{yan2018spatial}, interleave 1D-CNN layers with graph convolutional layers to tackle spatial temporal graphs in a non-recursive manner.
Besides, attention mechanism, an important technique STGCNs, is employed by
GaAN \cite{zhang2018gaan}, AGL-STAN~\cite{sun2022spatial} and ASTGCN \cite{guo2019attention} to learn dynamic dependencies in both space and time domain. The aforementioned approaches are targeted to the applications in 2D graph scenarios such as traffic networks and human skeleton graphs, and may not be well applicable to 3D physical systems in which geometric equivariance is a really important property.

\textbf{Equivariant spatio-temporal graph neural networks} There are
few previous works designing spatio-temporal GNNs while maintaining equivariance. Particularly, by using GRU~\cite{chung2014empirical} to record the memory of past frames, LoCS \cite{kofinas2021roto} additionally incorporates rotation-invariance to improve the model's generalization ability. Different from the recurrent update mechanism used in LoCs, EqMotion~\cite{xu2023eqmotion} distills the history trajectories of each node into a multi-dimension vector, by which the spatio-temporal graph is compressed as a spatial graph, then it designs an equivariant module and an interaction reasoning module to predict future frames. However, both of LoCS and EqMotion 
are still defective in exploring the interactions among history trajectories, while in this paper we propose a transformer-alike architecture to fully leverage the spatio-temporal interactions based on equivariant attentions.

\section{Notations and Task Definition}
The dynamics of physical objects (such as molecules) can be formulated with the notion of spatiotemporal graphs, as shown in Figure \ref{problem} (left). In particular, a spatiotemporal graph of node number $N$ and temporal length $T$ is denoted as $\{\gG_t = (\gV_t,\gE)\}_{t=0}^{T-1}$. Here, the nodes $\gV_t$ are shared across time, but each node $i$ is assigned with different scalar feature $\vh_i(t)\in\R^{c}$, position vector $\vec{\vx}_i(t)\in\R^3$ at different temporal frame $t$; the edges $\gE$ are associated with an identical adjacency matrix $\mA\in\R^{N\times N\times m}$ whose element $\mA_{ij}\in\R^m$ defines the edge feature between node $i$ and $j$. We henceforth denote by the matrices $\mH(t)\in\R^{c\times N}$ and $\vec{\mX}(t)\in\R^{3\times N}$ the collection of all nodes in $\gG_t$. Here for simplicity, we specify the time lag as $\Delta t=1$ which could be valued more than 1 in practice. In general, the $t$-th frame corresponds to time $T-t\Delta t$.

\textbf{Task Definition}
This paper is interested in predicting the physical state, particularly the position of each node at frame $T$ given the historical graph series $\{\gG_t\}_{t=0}^{T-1}$. In form, we learn the function $\phi$:
\begin{align}
\label{Eq:task}
    \{(\mH(t), \vec{\mX}(t),\mA)\}_{t=0}^{T-1} \xrightarrow{\phi}\vec{\mX}(T).
\end{align}
Although previous approaches such as EGNN \citep{satorras2021n} and GMN \citep{huangequivariant} also claim to tackle physical dynamics modeling, they neglect the application of spatiotemporal patterns and only accomplish frame-to-frame prediction, where the input of $\phi$ is reduced to a single frame, \emph{e.g.}, $\gG_{T-1}$ in Eq.~\ref{Eq:task}.

\textbf{Equivariance}
A crucial constraint on physical dynamics is that the function $\phi$ should meet the symmetry of our 3D world. In other words, for any translation/rotation/reflection $g$ in the group E(3), $\phi$ satisfies:
\begin{align}
    \phi(\{(\mH(t), g\cdot\vec{\mX}(t), \mA)\}_{t=0}^{T-1})=g\cdot\Vec{\mX}(T),
\end{align}
where the group action $\cdot$ is instantiated as $g\cdot\vec{\mX}(t):=\vec{\mX}(t)+\vb$ for translation $\vb\in\R^3$ and $g\cdot\vec{\mX}(t):=\mO\vec{\mX}(t)$ for rotation/reflection $\mO\in\R^{3\times 3}$.

\section{Our approach: ESTAG}
\label{sec:estag}
Discovering informative spatiotemporal patterns from the input graph series is vital to physical dynamics modeling. In this section, we introduce ESTAG that pursues this goal in a considerate way: We first extract the frequency of each node's trajectory by EDFT (\textsection~\ref{sec: EDFT}), which captures the node-wise temporal dynamics in a global sense and returns important features for the next stage; We then separately characterize the spatial dependency among the nodes of each input graph $\gG_t$ via ESM (\textsection~\ref{sec: ESM}); We finally unveil the temporal dynamics of each node through the attention-based mechanism ETM and output the estimated position of each node in $\gG_T$ after an equivariant temporal pooling layer (\textsection~\ref{sec: ETM}). The overall architecture is shown in Figure~\ref{estag_pipeline}.

\subsection{Equivariant Discrete Fourier Transform (EDFT)}
\label{sec: EDFT}

The Fourier Transform (FT) gives us insight into the wave frequencies contained in the input signal that is usually periodic. With the extracted frequencies, we are able to view the global behavior of each node $i$ in different frequency domains. Conventional multidimensional FT employs distinct Fourier bases for different input dimensions of the original signals. Here, to ensure equivariance, we first translate the signals by the mean position and then adopt the same basis over the spatial dimension. To be specific, we compute equivariant DFT as follows:
\begin{align}
\label{Eq:EDFT}
\vec{\vf}_i{(k)}= \sum_{t=0}^{T-1}e^{-i'\frac{2\pi}{T}kt}\;\left(\vec{\vx}_i{(t)}-\overline{\vec{\vx}{(t)}}\right),
\end{align}
where, $i'$ is the imaginary unit,  $k=0,1,\cdots,T-1$ is the frequency index, $\overline{\vec{\vx}{(t)}}$ is the average position of all nodes in the $t$-th frame $\gG_t$, and the output $\vec{\vf}_i{(k)}\in\sC^{3}$ is complex. The frequencies calculated by Eq.~\ref{Eq:EDFT} are then utilized to formulate two crucial quantities: the frequency cross-correlation $\mA_{ij}\in\R^{T}$ between node $i$ and $j$,  and the frequency amplitude $\vc_i\in\R^{T}$ of node $i$. 

In signal processing, cross-correlation measures the similarity of two functions $f_1$ and $f_2$. It satisfies $\gF\{f_1\star f_2\}=\overline{\gF\{f_1\}}\cdot\gF\{f_2\}$, where $\gF$ and $\star$ denote the FT and the cross-correlation operator, respectively, and $\overline{\gF}$ indicates the complex conjugate of $\gF$. Borrowing this idea, we compute the cross-correlation in the frequency domain by Eq.~\ref{Eq:EDFT} as:
\begin{align}
\mA_{ij}(k)=w_k(\vh_i)w_k(\vh_j)|\langle\vec{\vf}_i(k),\vec{\vf}_j(k)\rangle|,
\end{align}
where $\langle\cdot,\cdot \rangle$ defines the complex inner product. Notably, we have added two learnable parameters $w_k(\vh_i)$ and $w_k(\vh_j)$ dependent on node features, which act like spectral filters of the $k$-th frequency and enable us to select related frequency for the prediction. In the next subsection, we will apply $\mA_{ij}$ as the edge feature to capture the relationship between different nodes. We use Aspirin as an example and visualize the $\mA$ in Figure \ref{edft}.

\begin{figure}[h]
\vskip -0.1in
\centering
\includegraphics[scale=0.37]{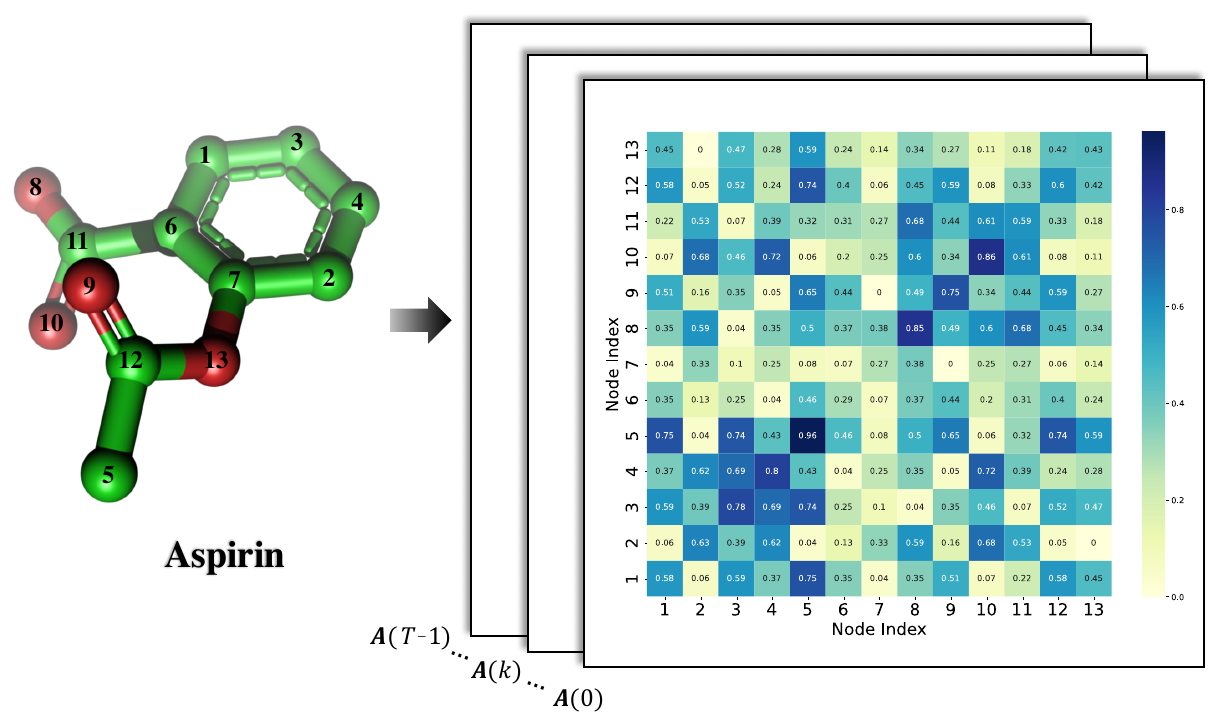}
    \caption{Visualization of cross-correlation $\mA$ on Aspirin. EDFT can not only identify strongly-connected nodes (e.g. Node 8 and Node 11), but also discover latent relationship between two nodes which are disconnected yet may have similar structures or functions (e.g. Node 8 and Node 10).
    }
\label{edft}
\vskip -0.1in
\end{figure}

We further compute for node $i$ the amplitude of the frequency $\vec{\vf}_i(k)$ along with the parameter $w_k(\vh_i)$:
\begin{align}
\vc_i(k)=w_k(\vh_i)\|\vec{\vf}_i(k)\|^2.
\end{align}
This term will be used in the update of the hidden features in the next subsection. 

A promising property of Eq.~\ref{Eq:EDFT} is that it is translation invariant and rotation/reflection equivariant. Therefore, both  $\mA_{ij}$ and $\vc_i$ are E(3)-invariant, which will facilitate the design of following modules.

\subsection{Equivariant Spatial Module (ESM) }
\label{sec: ESM}
For each graph $\gG_t$, our ESM is proposed to encode its spatial geometry through equivariant message passing. ESM is built upon EGNN~\citep{pmlr-v139-satorras21a} which is a prevailing kind of equivariant GNNs, but it has subtly involved the FT features from the last subsection for enhanced performance beyond EGNN. 

The $l$-th layer message passing in ESM is as below:
\begin{align}
    \label{eq:ESM-m}
    \vm_{ij} &= \phi_m\left(\vh_i^{(l)}(t),\vh_j^{(l)}(t),\|\vec{\vx}_{ij}^{(l)}(t)\|^2, \mA_{ij}\right),\\
    \label{eq:ESM-h}
    \vh_i^{(l+1)}(t) &= \vh_i^{(l)}(t) + \phi_h\left(\vh_i^{(l)}(t), \vc_i, \sum_{j\neq i} \vm_{ij}\right),\\
    \vec{\va}_i(t) &=\frac{1}{|\gN(i)|}\sum_{j\in\gN(i)}\vec{\vx}_{ij}^{(l)}(t)\phi_x{(\vm_{ij})},\\
    \label{eq:ESM-v}
    \vec{\vx}_i^{(l+1)}(t) &=\vec{\vx}_i^{(l)}(t)+\vec{\va}_i(t),
\end{align}
where, $\phi_m$ computes the message $\vm_{ij}$ from node $j$ to $i$, $\phi_h$ updates the hidden representation $\vh_i$, $\phi_{x}$ returns a one-dimensional scalar for the update of $\vec{\va}_i(t)$, and all the above functions are Multi-Layer Perceptrons (MLPs); $\vec{\vx}_{ij}(t)=\vec{\vx}_{i}(t)-\vec{\vx}_{j}(t)$ is the relative position and $\gN(i)$ denotes the neighborhoods of node $i$.

Notably, we leverage the cross-correlation $\mA_{ij}$ as the edge feature in Eq.~\ref{eq:ESM-m} to evaluate the connection between node $i$ and $j$ over the global temporal window, since it is computed from the entire trajectory. We also make use of $\vc_i$ as the input of the update in Eq.~\ref{eq:ESM-h}. The benefit of considering these two terms will be ablated in our experiments.

\subsection{Equivariant Temporal Module (ETM)}
\label{sec: ETM}

\textbf{Forward Temporal Attention}
Inspired by the great success of Transformer~\cite{vaswani2017attention} in sequence modeling, we develop ETM that describes the self-correspondence of each node's trajectory based on the forward attention mechanism, and more importantly, in an E(3)-equivariant way. 

In detail, each layer of ETM conducts the following process:
\begin{align}
\label{eq:ETM}
    \alpha_{i}^{(l)}(ts) &=\frac{\exp(\vq_i^{(l)}(t)^\top \vk_{i}^{(l)}(s))}{\sum_{s=0}^{t}\exp(\vq_i^{(l)}(t)^\top \vk_{i}^{(l)}(s))}, \\
    \vh_i^{(l+1)}(t) &= \vh_i^{(l)}(t) + \sum_{s=0}^{t}\alpha_{i}^{(l)}(ts)\vv_{i}^{(l)}(s),\\
    \vec{\vx}_i^{(l+1)}(t) &=\vec{\vx}_i^{(l)}(t)+\sum_{s=0}^{t}\alpha_{i}^{(l)}(ts)\vec{\vx}_i^{(l)}(ts)\phi_x{(\vv_{i}^{(l)}(s))},
\end{align}
where, $\alpha_{ts}$ is the attention weight between time $t$ and $s$, computed by the query $\vq_t$ and key $\vk_s$; the hidden feature $\vh_i(t)$ is updated as a weighted combination of the value $\vv_{s}$; the position vector $\vec{\vx}_i(t)$ is derived from a weighted combination of a one-dimensional scalar $\phi_x(\vv_{s})$ multiplied with the temporal displace vector $\vec{\vx}_i^{(l)}(ts)=\vec{\vx}_i^{(l)}(t)-\vec{\vx}_i^{(l)}(s)$.  Specifically, $\vq_i^{(l)}(t)=\phi_q\left(\vh_i^{(l)}(t)\right)$,
$\vk_{i}^{(l)}(t)=\phi_k\left(\vh_i^{(l)}(t)\right)$ and
$\vv_{i}^{(l)}(t)=\phi_v\left(\vh_i^{(l)}(t)\right)$ are all E($3$)-invariant functions. Notably, we derive a particle's next position in a forward-looking way, to keep physical rationality as the derivation of current state should not be dependent on future positions. 

\textbf{Equivariant Temporal Pooling}
We alternate one-layer ESM and one-layer ETM over $L$ layers, and finally attain the updated coordinates  $\vec{\vx}_i^{(L)} \in \R^{T\times 3}$ for each node $i$. Then the predicted coordinates at time $T$ is given by the following equivariant linear pooling: 
\begin{align}
\label{eq:etp}
    \vec{\vx}_i^*(T) = \hat{\mX}_i\vw+\vec{\vx}_i^{(L)}(T-1),
\end{align}
where the parameter $\vw \in \R^{(T-1)}$ consists of learnable weights, and $\hat{\mX}_i=[\vec{\vx}_i^{(L)}(0)-\vec{\vx}_i^{(L)}(T-1),\vec{\vx}_i^{(L)}(1)-\vec{\vx}_i^{(L)}(T-1),\cdots,\vec{\vx}_i^{(L)}(T-2)-\vec{\vx}_i^{(L)}(T-1)]$ is translated by $\vec{\vx}_i^{(L)}(T-1)$ to allow translation invariance.

We train ESTAG end-to-end via the mean squared error (MSE) loss:
\begin{equation}
\gL=\sum_{i=1}^N\|\vec{\vx}_i(T)-\vec{\vx}_i^*(T)\|_2^2.
\end{equation}

By the design of EDFT, ESM and ETM, we have the following property of our model ESTAG.

\begin{theorem}[]
\label{theorem}
We denote ESTAG as $\vec{\mX}(T) =\phi\left(\{(\mH(t), g\cdot\vec{\mX}(t),\mA)\}_{t=0}^{T-1}\right)$, then $\phi$ is E($3$)-equivariant.

Proof. See Appendix \ref{sec:theorem proof}.
\end{theorem}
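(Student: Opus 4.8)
The plan is to establish equivariance compositionally. Since $\phi$ is the composition of EDFT, $L$ alternating ESM/ETM layers, and the final temporal pooling of Eq.~\ref{eq:etp}, it suffices to show that each block sends an equivariantly transformed input to an equivariantly transformed output, because composites of such maps inherit the property. Fix an arbitrary $g\in\mathrm{E}(3)$, acting as $\vec{\vx}\mapsto\mO\vec{\vx}+\vb$ with $\mO\in\mathrm{O}(3)$ and $\vb\in\R^3$; note $g$ leaves the scalar features $\mH$ and the input adjacency $\mA$ untouched. The central device is a loop invariant, proved by induction on the layer index $l$: whenever the input coordinates are replaced by $g\cdot\vec{\mX}(t)$, every hidden feature $\vh_i^{(l)}(t)$ produced inside the network is $\mathrm{E}(3)$-invariant and every coordinate $\vec{\vx}_i^{(l)}(t)$ co-transforms, i.e.\ $\vec{\vx}_i^{(l)}(t)\mapsto\mO\vec{\vx}_i^{(l)}(t)+\vb$. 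Two elementary facts will be invoked repeatedly: (i) a difference of two co-transforming coordinates, such as $\vec{\vx}_{ij}(t)$ in Eq.~\ref{eq:ESM-m} or the temporal displacement $\vec{\vx}_i^{(l)}(ts)$ in ETM, has its translation cancel and maps to $\mO\vec{\vx}_{ij}(t)$ (resp.\ $\mO\vec{\vx}_i^{(l)}(ts)$); and (ii) because $\mO$ is orthogonal, squared norms and (complex) inner products of co-transforming vectors are $\mathrm{E}(3)$-invariant --- this is what extends the argument from $\mathrm{SO}(3)$ to all of $\mathrm{E}(3)$, including reflections.

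For the base case I would verify the claim asserted below Eq.~\ref{Eq:EDFT}: subtracting the per-frame mean $\overline{\vec{\vx}(t)}$ annihilates the translation $\vb$, while $\mO$ factors out of both the summation and the mean subtraction and commutes with the scalar $e^{-i'\frac{2\pi}{T}kt}$, so $\vec{\vf}_i(k)\mapsto\mO\vec{\vf}_i(k)$ (and is translation-invariant). By fact (ii), $|\langle\vec{\vf}_i(k),\vec{\vf}_j(k)\rangle|$ and $\|\vec{\vf}_i(k)\|^2$ are $\mathrm{E}(3)$-invariant, and since the learnable filters $w_k(\vh_i)$ depend only on the untouched $\vh_i$, both $\mA_{ij}$ and $\vc_i$ are $\mathrm{E}(3)$-invariant. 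Together with $\vh_i^{(0)}(t)$ being an invariant encoding of $\mH(t)$ and of these invariant EDFT features, this gives the loop invariant at $l=0$.

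For the inductive step I would treat ESM and ETM separately. In ESM, the message $\vm_{ij}$ of Eq.~\ref{eq:ESM-m} is a function of the invariant quantities $\vh_i^{(l)}(t),\vh_j^{(l)}(t),\|\vec{\vx}_{ij}^{(l)}(t)\|^2,\mA_{ij}$ (invariant by the inductive hypothesis and fact (ii)), hence invariant; therefore $\vh_i^{(l+1)}(t)$ from Eq.~\ref{eq:ESM-h} is invariant, and $\vec{\va}_i(t)$, a sum of the co-transforming differences $\vec{\vx}_{ij}^{(l)}(t)$ weighted by the invariant scalars $\phi_x(\vm_{ij})$, maps to $\mO\vec{\va}_i(t)$; adding it to $\vec{\vx}_i^{(l)}(t)\mapsto\mO\vec{\vx}_i^{(l)}(t)+\vb$ preserves the invariant in Eq.~\ref{eq:ESM-v}. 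In ETM, $\vq_i^{(l)}(t),\vk_i^{(l)}(t),\vv_i^{(l)}(t)$ are invariant functions of the invariant $\vh_i^{(l)}(t)$, so the attention weights $\alpha_i^{(l)}(ts)$ --- a softmax over inner products of invariant vectors --- are invariant, the hidden update is invariant, and the coordinate update is $\vec{\vx}_i^{(l)}(t)$ plus an invariant-weighted combination of the co-transforming temporal displacements $\vec{\vx}_i^{(l)}(ts)$, which again preserves the invariant. Finally, the temporal pooling of Eq.~\ref{eq:etp} writes $\vec{\vx}_i^*(T)=\hat{\mX}_i\vw+\vec{\vx}_i^{(L)}(T-1)$, whose first term is a linear combination of the co-transforming columns $\vec{\vx}_i^{(L)}(s)-\vec{\vx}_i^{(L)}(T-1)$ and hence maps to $\mO\hat{\mX}_i\vw$, so $\vec{\vx}_i^*(T)\mapsto\mO\vec{\vx}_i^{(L)}(T-1)+\vb+\mO\hat{\mX}_i\vw=g\cdot\vec{\vx}_i^*(T)$, proving the theorem. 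The proof is essentially type-tracking rather than hard analysis; the only place requiring genuine care is the EDFT step, where one must check that the mean-subtraction is exactly what removes translations and that it is the moduli/norms --- not the raw complex vectors $\vec{\vf}_i(k)$ --- that are fed downstream, so that $\mathrm{E}(3)$-invariance (and coverage of reflections) really holds.
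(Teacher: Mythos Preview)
Your proposal is correct and follows essentially the same module-by-module verification as the paper's own proof in Appendix~\ref{sec:theorem proof}: both check that EDFT produces invariant $\mA_{ij},\vc_i$, that each ESM and ETM layer preserves invariance of $\vh$ and co-transformation of $\vec{\vx}$, and that the final temporal pooling is equivariant. Your presentation is more explicit in framing the argument as an induction with a stated loop invariant and in isolating the two reusable facts (differences cancel translations; orthogonal $\mO$ preserves norms and inner products), but the underlying strategy is identical.
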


Although we mainly exploit EGNN as the backbone (particularly in ESM), our framework is general and can be easily extended to other equivariant GNNs, such as GMN~\cite{huangequivariant}, the multi-channel version of EGNN. In general, the extended models deal with multi-channel coordinate $\vec{\mZ}\in \R^{3\times m}$ instead of $\vec{\vx}\in \R^3$. The most significant feature of these models is to replace the invariant scalar $\|\vec{\vx}\|^2$ in the formulations of the message $\vm_{ij}$ (Eq. \ref{eq:ESM-m}) with the term $\vec{\mZ}^\top\vec{\mZ}$.
It is easily to prove that this term is equivariant to any orthogonal matrix $\mO$, $i.e.$, $(\mO\vec{\mZ})^\top(\mO\vec{\mZ})=\vec{\mZ}^\top\vec{\mZ}$, $\forall \mO\in \R^{3\times 3}, \mO^\top\mO=\mI$. Besides, it can be reduced to invariant scalar $\|\vec{\vx}\|^2$ when $m=1$. Empirically, we add the normalization term in order to achieve more stable performance: $\frac{\vec{\mZ}^\top\vec{\mZ}}{\|\vec{\mZ}^\top\vec{\mZ}\|_F}$, where $\|\cdot\|_F$ is the Frobenius norm. In the above derivation, we only display the formulation on EGNN, the details of multi-channel ESTAG are shown in Appendix \ref{sec: multi_channel_estag}.

\section{Experiments}
\label{sec:exp}
\textbf{Datasets.}
To verify the superiority of the proposed model, we evaluate our model on three real world datasets: \textbf{1)} molecular-level: MD17 \cite{md17}, \textbf{2)} protein-level:  AdK equilibrium trajectory dataset \cite{Seyler2017} and \textbf{3)} macro-level: CMU Motion Capture Databse \cite{cmu2003}. These datasets involve several continuous long trajectories. Note that all the three datasets contain unobserved dynamics or factors and thus conform to the non-Markovian setting. In particular, the external temperature and pressure are unknown on MD17, the dynamics of water and ions is unobserved on AdK, and the states of the environment are not provided on Motion Capture. The original datasets are composed of long trajectories. We randomize the start point and extract the following $T+1$ points with the interval $\Delta t$. We take the first $T$ timestamps as previous observations and last timestamp as the future position label.

\textbf{Baselines.}
We compare the performance of ESTAG with several baselines: \textbf{1)} We regard the previous observation at start/mid/terminal ($s/m/t$) timepoint as the estimated position at timestamp $T$ directly. \textbf{2)}  \textbf{EGNN} \cite{pmlr-v139-satorras21a} utilizes a simple yet efficient framework which transforms the 3D
vectors into invariant scalars. We provide EGNN with only one previous position at $s/m/t$ timepoint  to predict the future position in a frame-to-frame manner. \textbf{3)} \textbf{STGCN} \cite{yu2018spatio} is a spatio-temporal GNN that adopts a "sandwich" structure with two gated sequential convolution layers and on spatial graph convolution layer in between. We modify its default settings by predicting the residual coordinate between time $T-1$ and $T$, since directly predicting the exact coordinate at time $T$ yields much worse performance.\textbf{4)} \textbf{AGL-STAN} \cite{sun2022spatial} leverages adaptive graph learning and self-attention for a comprehensive representation of intra-temporal dependencies and inter-spatial interactions. We modify AGL-STAN's setting in the same way as we do with STGCN. \textbf{5)} Typical GNNs with trivial spatio-temporal aggregation: we implement GNN~\cite{gilmer2017neural}, and other equivariant models EGNN, TFN~\cite{thomas2018tensor}, and SE(3)-Transformer~\cite{fuchs2020se} for each temporal frame in the historical trajectory and then estimate the future position as the weighted sum of all past frames, where the weights are learnable. All models are denoted with a prefix “ST” and we initialize their node features along with temporal positional encoding. \textbf{6)} \textbf{EqMotion}~\cite{xu2023eqmotion} is one of equivariant spatio-temporal GNNs, which leverages the temporal information by fusing them for the model's initialization.

\begin{table*}[t!]
\caption{Prediction error ($\times 10^{-3}$) on MD17 dataset. Results averaged across 3 runs. We do not display the standard deviation due to its small value.}
\vskip -0.1in
\label{md17_result}
\begin{center}
\begin{small}
\begin{sc}
\tabcolsep=0.05cm
\resizebox{\textwidth}{!}{
\begin{tabular}{lcccccccc}
\toprule
 & Aspirin  & Benzene & Ethanol & Malonaldehyde&Naphthalene&Salicylic&Toluene&Uracil
  \\
\midrule
Pt-$s$ &15.579	&4.457	&4.332	&13.206&	8.958&	12.256	&6.818	&10.269\\
Pt-$m$ & 9.058 & 2.536 & 2.688 & 6.749 & 6.918 & 8.122 & 5.622 & 7.257  \\
Pt-$t$ & 0.715 & 0.114 & 0.456 & 0.596 & 0.737 & 0.688 & 0.688 & 0.674 \\
\midrule
EGNN-$s$ &12.056	&3.290	&2.354	&10.635	&4.871	&8.733	&3.154	&6.815 \\
EGNN-$m$ &6.237	&1.882	&1.532	&4.842	&3.791	&4.623	&2.516	&3.606\\
EGNN-$t$ &0.625	&0.112	&0.416	&0.513	&0.614	&0.598	&0.577	&0.568 \\
\midrule

ST\_TFN &0.719	&0.122	&0.432	&0.569	&0.688	&0.684	&0.628	&0.669 \\
ST\_GNN &1.014	&0.210	&0.487	&0.664	&0.769	&0.789	&0.713	&0.680 \\
ST\_SE(3)tr &\underline{0.669}	&0.119	&0.428	&0.550	&0.625	&0.630	&0.591	&0.597 \\
ST\_EGNN &0.735	&0.163	&\underline{0.245}	&\underline{0.427}	&0.745	&0.687	&\underline{0.553}	&\underline{0.445} \\
EqMotion &0.721	&0.156	&0.476	&0.600	&0.747	&0.697 &0.691 &0.681 \\
STGCN &0.715	&\underline{0.106}	&0.456	&0.596	&0.736	&0.682	&0.687	&0.673 \\
AGL-STAN &0.719	&\underline{0.106}	&0.459	&0.596	&\underline{0.601}	&\underline{0.452}	&0.683	&0.515 \\
\midrule
ESTAG &\textbf{0.063}	&\textbf{0.003}	&\textbf{0.099}	&\textbf{0.101}	&\textbf{0.068}	&\textbf{0.047}	&\textbf{0.079}	&\textbf{0.066} \\
\bottomrule

\end{tabular}
}
\end{sc}
\end{small}
\end{center}
\end{table*}

\begin{figure*}[t!]
    \vskip -0.1in
    \centering
    \includegraphics[width=1\textwidth]{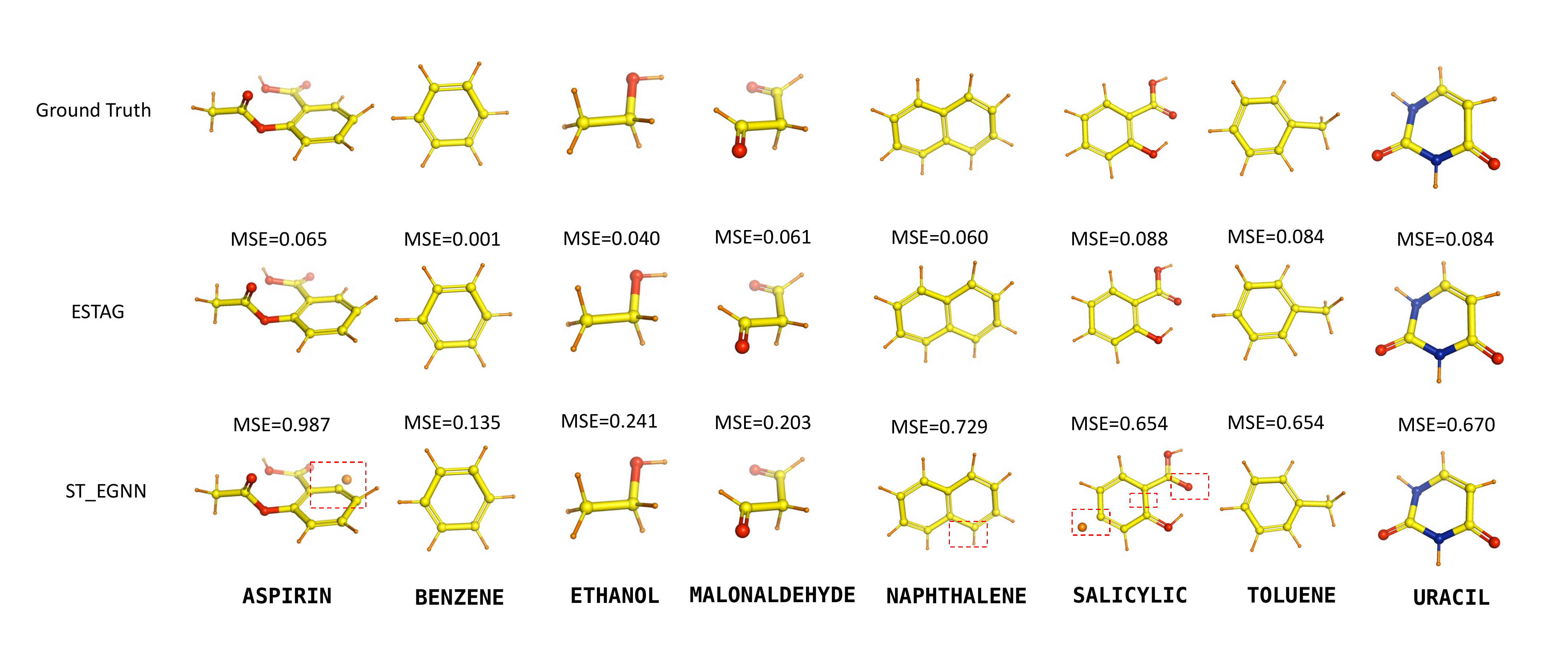}
    \vskip -0.1in
    \caption{PyMol visualization of the predicted molecules by our ESTAG and ST\_EGNN, where the MSE ($\times 10^{-3}$) with respect to the ground truth is also shown. As expected, the predicted instances by ESTAG exhibit much smaller MSE than ST\_EGNN, although the difference is not easy to visualize in some cases. For those obviously mispredicted regions of ST\_EGNN, we highlight them with red rectangles. It is observed that ST\_EGNN occasionally outputs isolated atoms, which could be caused by violation of the bond length tolerance in PyMol.}
    \label{md17_vis}
    \vskip -0.1in
\end{figure*}

\subsection{Molecular-level: MD17}

\textbf{Implementation details.}
MD17 dataset includes the trajectories of 8 small molecules generated by MD simulation. We use the atomic number as the time-independent input node feature $h^{(0)}$. The two atoms are 1-hop neighbor if their distance is less than the threshold $\lambda$ and we consider two types of neighbors (i.e. 1-hop neighbor and 2-hop neighbor). Other settings including the hyper-parameters are introduced in Appendix \ref{sec:more_details_md17}.

\textbf{Results.} 
Table \ref{md17_result} 
shows the average MSE of all models on 8 molecules. We have some interesting observations as follows: 
\textbf{1)} ESTAG exceeds other models in all cases by a significant margin, supporting the general effectiveness of the proposed ideas. 
\textbf{2)} From the Pt-$s/m/t$, 
we observe that the point closer to the future point has less prediction error. EGNN-$s/m/t$ only takes one frame ($s/m/t$) as input, and attains slight improvement relative to Pt-$s/m/t$.
\textbf{3)} Compared with EGNN-$t$, ST\_EGNN, although equipped with trivial spatio-temporal aggregation, is unable to obtain consistent improvement, which indicates that how to unveil temporal dynamics appropriately is beyond triviality on this dataset. 
\textbf{4.} The non-equivariant methods particularly ST\_GNN perform unsatisfactorily in most cases, implying that equivariance is an important property when modeling 3D structures.

\textbf{Visualization.} 
We have provided some visualizations of the predicted molecules by using the PyMol toolkit. Figure~\ref{md17_vis} shows that the predicted MSEs of our model are much lower than ST\_EGNN and our predicted molecules are closer to the ground-truth molecules. We have also displayed the learned attentions (Eq.~\ref{eq:ETM}) and the temporal weight (Eq.~\ref{eq:etp}), where we find meaningful patterns of the temporal correlation. For clearer comparison, we display the molecule URACIL in 3D coordinate system, which can be found in Appendix \ref{sec:vis}.

\begin{wraptable}[18]{r}{0.4\textwidth}
\vskip -0.25in
\caption{Prediction error and training time on Protein dataset. Results averaged across 3 runs.}
\label{Protein_result}
\begin{center}
\begin{sc}

\begin{tabular}{lcc}
\toprule
Method & MSE & Time(s)
  \\
\midrule
Pt-$s$ &3.260 & -\\
Pt-$m$  & 3.302 & -\\
Pt-$t$ & 2.022 & -\\
\midrule
EGNN-$s$ &3.254 &1.062\\
EGNN-$m$ &3.278 &1.088\\
EGNN-$t$ &1.983 &1.069\\
\midrule

ST\_GNN & 1.871 &2.769\\
ST\_GMN &\underline{1.526} &4.705\\
ST\_EGNN  & 1.543 &4.705\\
STGCN & 1.578 &1.840\\
AGL-STAN &1.671 &1.478\\

\midrule
ESTAG  &\textbf{1.471} &6.876\\
\bottomrule
\end{tabular}
\end{sc}

\end{center}
\end{wraptable}

\subsection{Protein-level: Protein Dynamics}

\textbf{Implementation details.}
We evaluate our model on the AdK equilibrium trajectory dataset \cite{Seyler2017} via MDAnalysis toolkit \cite{gowers2016mdanalysis}. In order to reduce the data scale, we utilize MDAnalysis to locate the backbone atoms ($C_\alpha, C,N,O$) of the residues and then regard the residues other than atoms in protein as the nodes with $4$-channel geometric features. We use the atomic number of four backbone atoms as the time-independent input node feature $h^{(0)}$. We connect two atoms via an edge if their distance is less than a threshold $\lambda$. Other settings including the hyper-parameters are introduced in Appendix \ref{sec:more_details_protein}. Slightly different from the model on MD17 dataset, we generalize the single-channel ESTAG into multi-channel version which is presented in detail in Appendix \ref{sec: extended_model}. We do not conduct EqMotion, TFN and SE(3)-TR used in the last experiment since it is non-trivial to modify them into multi-channel modeling. The state-of-the-art method GMN~\cite{huangequivariant} is implemented by further adding the weighed temporal pooling similar to ST\_EGNN.

\textbf{Results.} The predicted MSEs are displayed in Table~\ref{Protein_result}. Generally, the spatio-temporal models are better than Pt-$s/m/t$ and EGNN-$s/m/t$, and it suggests that applying spatio-temporal clues on this dataset is crucial, particularly given that the protein trajectories are generated under the interactions with external molecules such as water and ions. 
The equivariant models always outperform the non-equivariant counterparts (for example, ST\_EGNN vs. ST\_GNN). Overall, our model ESTAG achieves the best performance owing to its elaboration of equivariant spatio-temporal modeling. 
Additionally, we report the training time averaged over epochs in Table~\ref{Protein_result}. It shows that the computation overhead of ESTAG over its backbone EGNN is acceptable given its remarkable performance enhancement. It is expected that the superiority of ESTAG on protein dataset is not as obvious as that on MD17, owing to various kinds of physical interactions between different amino acids, let along each amino acid is composed of a certain number of atoms, which makes the dynamics of a protein much more complicated than small molecules.

\subsection{Macro-level: Motion Capture}

\begin{wraptable}[20]{r}{0.45\textwidth}
\vskip -0.25in
\caption{Prediction error ($\times 10^{-1}$) on Motion dataset. Results averaged across 3 runs.}
\label{Motion_result}
\begin{center}
\begin{sc}
\tabcolsep=0.08cm
\begin{tabular}{lcc}
\toprule
Method & Walk & Basketball
  \\
\midrule
Pt-$s$ &329.474 &886.023 \\
Pt-$m$  &127.152 &413.306 \\
Pt-$t$ &3.831 &15.878 \\
\midrule
EGNN-$s$ &63.540 &749.486 \\
EGNN-$m$ &32.016 &335.002 \\
EGNN-$t$ &0.786 &12.492 \\
\midrule

ST\_GNN &0.441 &15.336 \\
ST\_TFN  &0.597 &13.709 \\
ST\_SE(3)TR &0.236 &13.851 \\
ST\_EGNN  &0.538 &13.199 \\
EqMotion &1.011 &\underline{4.893} \\
STGCN &0.062 &4.919 \\
AGL-STAN &\textbf{0.037} &5.734 \\

\midrule
ESTAG  &\underline{0.040} &\textbf{0.746}\\
\bottomrule
\end{tabular}
\end{sc}

\end{center}
\vskip 2in
\end{wraptable}

\textbf{Implementation details.} We finally adopt CMU Motion Capture Database \cite{cmu2003} to evaluate our model. CMU Motion Capture Database involves the trajectories of human motion under several scenarios and we focus on walking motion (subject \#35) and basketball motion (subject \#102, only take trajectories whose length is greater than 170).
The input feature of all the joints (nodes) $h_i^{(0)}$ are all $1$s. The two joints are 1-hop neighbor if they are connected naturally and we consider two types of neighbors (i.e. 1-hop neighbor and 2-hop neighbor). Other settings including the hyper-parameters are introduced in Appendix \ref{sec:more_details_motion}. Notably, the input of EqMotion only contain node coordinates, as the same as our method and other baselines. We find that EqMotion performs much worse by directly predicting the absolute coordinates. We then modify EqMotion to predict the relative coordinates across two adjacent frames and perform zero-mean normalization of node coordinates, for further improvement.

\textbf{Results.} 
Table \ref{Motion_result} summarizes the results of all models on the Motion dataset. The spatio-temporal models are better than Pt-$s/m/t$ and EGNN-$s/m/t$, which again implies the necessity of taking the spatio-temporal history into account. 
Unexpectedly, the non-equivariant models are even superior to the equivariant baselines in walking motion, by, for instance, comparing AGL-STAN with ESTAG. We conjure that the samples of this dataset are usually collected in the same orientation, which potentially subdues the effect of rotation equivariance. It is thus not surprising that GNN even outperforms EGNN since GNN involves more flexible form of message passing. But for basketball motion which is more complicated to simulate, ESTAG yields a much lower MSE. We also notice that the attention-based models including our ESTAG, ST\_SE(3)-Tr, and AGL-STAN perform promisingly, which probably due to the advantage of using attention to discovery temporal interactions within the trajectories.

\subsection{Ablation Studies} 
Here we conduct several ablation experiments on MD17 to inspect how our proposed components contribute to the overall performance and the results are shown in Table \ref{md17_ablation}.

\textbf{1)} Without EDFT. We replace the FT-based edge features with the predefined edge features based on the connecting atom types and the distance between them. The simplified model encounters an average of increase in MSE, showcasing the effectiveness of the FT-based feature in modeling the spatial relation in graphs.

\textbf{2)} Without attention. We remove the ETM of ESTAG and observe slight detriment in the model performance, which demonstrates that attention mechanism can well capture the temporal dynamics.

\textbf{3)} Without equivariance. We construct a non-equivariant spatio-temporal attentive framework based on vanilla GNN. ESTAG performs much better than the non-equivariant STAG, indicating that Euclidean equivariance is a crucial property when designing model on geometric graph.

\textbf{4)} Without temporal. Instead of using "Equivariant Temporal Pooling" (see Section \ref{sec: ETM}), we employ a set of learnable coefficients, equipped with softmax function meanwhile, to aggregate the past T frames simply. 

\textbf{5)} The impact of the number of layers $L$. We investigate effect of the number of layers $L$ on Ethanol dataset. We vary $T$ from 1, 2, 3, 4, 5, 6 and present the results in Table \ref{md17_ablation_L}. Considering the accuracy and efficiency simultaneously, we choose $L=2$ for the ESTAG. 

More ablation studies will be shown in Appendix \ref{sec: more ablation studies}.

\begin{table*}[h]
\caption{Ablation studies ($\times 10^{-3}$) on MD17 dataset. Results averaged across 3 runs.}
\label{md17_ablation}
\vskip -0.3in
\begin{center}
\begin{small}
\tabcolsep=0.08cm
\begin{tabular}{lcccccccc}
\toprule
 & Aspirin  & Benzene & Ethanol & Malonaldehyde&Naphthalene&Salicylic&Toluene&Uracil
  \\
\midrule

ESTAG &\textbf{0.063}	&\textbf{0.003}	&\textbf{0.099}	&\textbf{0.101}	&\textbf{0.068}	&\textbf{0.047}	&\textbf{0.079}	&0.066\\
\midrule
w/o EDFT &0.079	&\textbf{0.003}	&0.108	&0.148	&0.104	&0.145	&0.102	&\textbf{0.063}\\
w/o Attention &0.087	&0.004	&0.104	&0.112	&0.129	&0.095	&0.097	&0.078\\
w/o Equivariance &0.762	&0.114	&0.458	&0.604	&0.738	&0.698	&0.690	&0.680\\
w/o Temporal &0.084	&\textbf{0.003}	&0.111	&0.139	&0.141	&0.098	&0.153	&0.071\\
\bottomrule
\end{tabular}
\end{small}
\end{center}
\vskip -0.1in
\end{table*}

\begin{table*}[h]
    \vskip -0.1in
\caption{ MSE on Ethanol $w.r.t.$ the number of layers $L$.}
\label{md17_ablation_L}
\vskip 0.05in
\begin{center}
\begin{small}
\begin{sc}
\tabcolsep=0.08cm
\begin{tabular}{c|cccccc}
\toprule
 $L$&1&2&3&4&5&6
  \\
\midrule
 MSE ($\times 10^{-4}$) &1.25	&0.990	&1.096	&1.022	&1.042	&1.028\\
\bottomrule
\end{tabular}
\end{sc}
\end{small}
\end{center}
\vskip -0.3in
\end{table*}

\section{Conclusion}
In this paper, we propose ESTAG, an end-to-end equivariant architecture for physical dynamics modeling. ESTAG first extracts frequency features via a novel Equivariant Discrete Fourier Transform (EDFT), and then leverages Equivariant Spatial Module (ESM) and an attentive Equivariant Temporal Module (ETM) to refine the coordinate in space and time domain alternatively. Comprehensive experiments over multiple tasks verify the superiority of ESTAG from molecular-level, protein-level, and to macro-level. Necessary ablations, visualizations, and analyses are also provided to support the validity of our design as well as the generalization of our method. One potential limitation of our model is that we only enforce the E(3) symmetry while other inductive bias like the energy conservation law is also required in physical scenarios.

In the future, we will continue extending our benchmark with more tasks and datasets and evaluate more baselines to validate the effectiveness of our model. It is also promising to extend our model to multi-scale GNN (like SGNN~\cite{han2022learning}, REMuS-GNN~\cite{lino2022multi}, BSMS-GNN~\cite{cao2023efficient} and MS-MGN~\cite{fortunato2022multiscale}), which is useful particularly for industrial-level applications involving huge graphs. Besides, it is valuable to employ our simulation method as a basic block for other applications such as drug discovery, material design, robotic control, etc.

\section{Acknowledgement}

This work was jointly supported by the following projects: the Scientific Innovation 2030 Major Project for New Generation of AI under Grant NO. 2020AAA0107300, Ministry of Science and Technology of the People's Republic of China; the National Natural Science Foundation of China (62006137);  Beijing Nova Program (20230484278); the Fundamental Research Funds for the Central Universities, and the Research Funds of Renmin University of China (23XNKJ19); Tencent AI Lab Rhino-Bird Focused Research Program (RBFR2023005); Ant Group through CCF-Ant Research Fund (CCF-AFSG RF20220204); Public Computing Cloud, Renmin University of China.

\bibliographystyle{plain}
\bibliography{ref}

\newpage
\appendix
\onecolumn

\section{Proof of ESTAG's Equivariance}
\label{sec:theorem proof}
\begin{theorem}
We denote ESTAG as $\vec{\mX}(T) =\phi\left(\{(\mH(t), g\cdot\vec{\mX}(t),\mA)\}_{t=0}^{T-1}\right)$, then $\phi$ is E($3$)-equivariant.\\
\end{theorem}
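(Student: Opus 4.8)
The plan is to prove E($3$)-equivariance of $\phi$ by a modular argument: since ESTAG is a composition of EDFT, alternating ESM and ETM layers, and the final equivariant temporal pooling, it suffices to show that each module has the appropriate transformation behavior, and that these behaviors compose correctly. Concretely, I would first establish that EDFT produces E($3$)-\emph{invariant} outputs $\mA_{ij}$ and $\vc_i$ (as already claimed after Eq.~\ref{Eq:EDFT}), while ESM and ETM are E($3$)-\emph{equivariant} on the coordinate channel and E($3$)-\emph{invariant} on the scalar-feature channel; the composition then inherits equivariance. Throughout, let $g$ act as $\vec{\vx}_i(t)\mapsto\mO\vec{\vx}_i(t)+\vb$ with $\mO^\top\mO=\mI$, $\vb\in\R^3$, and note the scalar features $\vh_i(t)$ and the edge features $\mA$ are untouched by $g$.

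First I would handle EDFT. Because the Fourier coefficients are computed from the \emph{mean-centered} positions $\vec{\vx}_i(t)-\overline{\vec{\vx}(t)}$, a translation $\vb$ cancels exactly, and a rotation/reflection $\mO$ factors out of the (linear) sum in Eq.~\ref{Eq:EDFT}, giving $\vec{\vf}_i(k)\mapsto\mO\vec{\vf}_i(k)$. Then the complex inner product $\langle\vec{\vf}_i(k),\vec{\vf}_j(k)\rangle$ and the squared norm $\|\vec{\vf}_i(k)\|^2$ are preserved since $\mO$ is orthogonal (acting entrywise on real and imaginary parts), so $\mA_{ij}$ and $\vc_i$ are E($3$)-invariant; the learnable weights $w_k(\vh_i)$ depend only on invariant scalars and are therefore unaffected. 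Next, for one ESM layer (Eqs.~\ref{eq:ESM-m}--\ref{eq:ESM-v}): assuming inductively that the incoming $\vh_i^{(l)}(t)$ are invariant and $\vec{\vx}_i^{(l)}(t)$ transform as $\mO\vec{\vx}_i^{(l)}(t)+\vb$, the relative positions $\vec{\vx}_{ij}^{(l)}(t)$ become $\mO\vec{\vx}_{ij}^{(l)}(t)$, so $\|\vec{\vx}_{ij}^{(l)}(t)\|^2$ is invariant; hence $\vm_{ij}$ is invariant, $\vh_i^{(l+1)}(t)$ is invariant, and $\vec{\va}_i(t)=\frac{1}{|\gN(i)|}\sum_{j}\vec{\vx}_{ij}^{(l)}(t)\phi_x(\vm_{ij})$ transforms as $\mO\vec{\va}_i(t)$ (a scalar-weighted sum of equivariant vectors), so $\vec{\vx}_i^{(l+1)}(t)=\vec{\vx}_i^{(l)}(t)+\vec{\va}_i(t)\mapsto\mO\vec{\vx}_i^{(l+1)}(t)+\vb$, closing the induction. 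The same template applies to one ETM layer (Eq.~\ref{eq:ETM} and the two update lines): the queries/keys/values $\vq,\vk,\vv$ are functions of invariant $\vh$ hence invariant, so the attention weights $\alpha_i^{(l)}(ts)$ are invariant; the feature update stays invariant; and the coordinate update uses the \emph{differences} $\vec{\vx}_i^{(l)}(ts)=\vec{\vx}_i^{(l)}(t)-\vec{\vx}_i^{(l)}(s)$, which are translation-free and rotate as $\mO\vec{\vx}_i^{(l)}(ts)$, so $\vec{\vx}_i^{(l+1)}(t)\mapsto\mO\vec{\vx}_i^{(l+1)}(t)+\vb$.

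Finally I would treat the equivariant temporal pooling (Eq.~\ref{eq:etp}): since $\hat{\mX}_i$ is built from differences $\vec{\vx}_i^{(L)}(s)-\vec{\vx}_i^{(L)}(T-1)$, which transform as $\mO(\vec{\vx}_i^{(L)}(s)-\vec{\vx}_i^{(L)}(T-1))$ with no translation term, $\hat{\mX}_i\vw\mapsto\mO\hat{\mX}_i\vw$, and adding back $\vec{\vx}_i^{(L)}(T-1)\mapsto\mO\vec{\vx}_i^{(L)}(T-1)+\vb$ yields $\vec{\vx}_i^*(T)\mapsto\mO\vec{\vx}_i^*(T)+\vb$, i.e. $\phi$ is E($3$)-equivariant. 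Chaining the per-layer claims over all $L$ alternations (the scalar channel stays invariant, the coordinate channel stays equivariant) completes the argument. The only genuinely delicate points — and the place I would spend the most care — are (i) verifying that \emph{reflections} (not just rotations) are handled, which hinges solely on $\mO^\top\mO=\mI$ and never on $\det\mO=1$, so every step above goes through verbatim; and (ii) confirming that the mean-centering in EDFT and the reference-frame subtraction in ESM/ETM/pooling are exactly what removes the translation component while leaving the orthogonal part to commute through all the linear operations — everything else is a routine structural induction over the layers.
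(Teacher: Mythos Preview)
Your proposal is correct and follows essentially the same modular decomposition as the paper's own proof: show EDFT yields E($3$)-invariant $\mA_{ij}$ and $\vc_i$, then verify E($3$)-equivariance of ESM, ETM, and the final temporal pooling separately, and compose. If anything, your write-up is more explicit than the paper's (which largely just rewrites each update equation with $\mO$ and $\vb$ inserted), since you spell out why mean-centering kills the translation, why $\mO^\top\mO=\mI$ suffices for reflections, and how the induction over layers closes.
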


\begin{proof}

\textbf{1.} We firstly prove that EDFT is E($3$)-equivariant. 
\begin{align*}
\mO\vec{\vf}_i{(k)}&=\sum_{t=0}^{T-1}e^{-i'\frac{2\pi}{T}kt}\;\left(\mO\vec{\vx}_i{(t)}+\vb-\overline{\mO\vec{\vx}{(t)}+\vb}\right),\\
\mA_{ij}(k)&=w_k(\vh_i)w_k(\vh_j)|\langle\mO\vec{\vf}_i(k),\mO\vec{\vf}_j(k)\rangle|,\\
\vc_i(k)&=w_k(\vh_i)\|\mO\vec{\vf}_i(k)\|^2.
\end{align*}

\textbf{2.} We secondly prove the E(3)-equivariance of ESM.
\begin{align*}
    \vm_{ij} &= \phi_m\left(\vh_i^{(l)}(t),\vh_j^{(l)}(t),\|\mO\vec{\vx}_{ij}^{(l)}(t)\|^2, \mA_{ij}\right),\\
    \vh_i^{(l+1)}(t) &= \phi_h\left(\vh_i^{(l)}(t), \vc_i(k), \sum_{j\neq i} \vm_{ij}\right),\\
    \mO\vec{\va}_i(t) &=\frac{1}{|\gN(i)|}\sum_{j\in\gN(i)}\mO\vec{\vx}_{ij}^{(l)}(t)\phi_x{(\vm_{ij})},\\
    \mO\vec{\vx}_i^{(l+1)}(t)+\vb &=\mO\vec{\vx}_i^{(l)}(t)+\vb+\mO\vec{\va}_i^{(l+1)}(t).
\end{align*}

\textbf{3.} We then prove that ETM is E($3$)-equivariant. 

\begin{align*}
    \vq_i^{(l)}(t)&=\phi_q\left(\vh_i^{(l)}(t)\right),\\
    \vk_i^{(l)}(t)&=\phi_k\left(\vh_i^{(l)}(t)\right),\\
    \vv_i^{(l)}(t)&=\phi_v\left(\vh_i^{(l)}(t)\right), \\
    \alpha_{i}^{(l)}(ts) &=\frac{\exp(\vq_i^{(l)}(t)^\top \vk_{i}^{(l)}(s))}{\sum_{s=0}^{t}\exp(\vq_i^{(l)}(t)^\top \vk_{i}^{(l)}(s))}, \\
    \vh_i^{(l+1)}(t) &= \vh_i^{(l)}(t) + \sum_{s=0}^{t}\alpha_{i}^{(l)}(ts)\vv_{i}^{(l)}(s),,\\
    \mO\vec{\vx}_i^{(l+1)}(t) +\vb&=\mO\vec{\vx}_i^{(l)}(t)+\vb+\sum_{s=0}^{t}\alpha_{i}^{(l)}(ts)\;\mO\vec{\vx}_i^{(l)}(ts)\phi_x{(\vv_{i}^{(l)}(s))}.
\end{align*}

\textbf{4.} We finally prove that
the linear pooling is equivariant: 
\begin{align*}
    \mO\vec{\vx}_i^*(T)+\vb =\mO \hat{\mX}_i\vw+\mO\vec{\vx}_i^{(L)}(T-1)+\vb.
\end{align*}

\end{proof}

\section{Full Algorithm Details}
In the main body of the paper, for better readability, we present the implementation details of ESTAG. Here, we combine them into one singe algorithmic flowchart in Algorithm \ref{alg:stag}.
\begin{algorithm}[h]
   \caption{Equivariant Spatio-Temporal Attentive Graph Networks (ESTAG)}
   \label{alg:stag}
   
    \begin{algorithmic}
       \STATE {\bfseries Input:} Initial historical graph series $\{(\mH^{(0)}(t), \vec{\mX}^{(0)}(t), \mA\}_{t=0}^{T-1}$.
        
       \FOR{$i=1$ {\bfseries to} $N$}
           \STATE
            Equivariant Discrete Fourier Transform (EDFT):
            \begin{align}
                \vec{\vf}_i{(k)}&=\sum_{t=0}^{T-1}e^{-i'\frac{2\pi}{T}kt}\;\left(\vec{\vx}_i{(t)}-\overline{\vec{\vx}{(t)}}\right), \\
                \mA_{ij}(k)&=w_k(\vh_i)w_k(\vh_j)|\langle\vec{\vf}_i(k),\vec{\vf}_j(k)\rangle|, \\
                \vc_i(k)&=w_k(\vh_i)\|\vec{\vf}_i(k)\|^2.
            \end{align}
       \ENDFOR
        
        \FOR{$l=1$ {\bfseries to} $L$}
            \FOR{$t=0$ {\bfseries to} $T-1$}
                \STATE
                Equivariant Spatial Module (ESM):
                \begin{align}
                \vm_{ij} &= \phi_m\left(\vh_i^{(l-1)}(t),\vh_j^{(l-1)}(t),\|\vec{\vx}_{ij}^{(l-1)}(t)\|^2, \mA_{ij}\right), \\
                \vh_i^{(l-0.5)}(t) &= \phi_h\left(\vh_i^{(l-1)}(t),\vc_i(k), \sum_{j\neq i} \vm_{ij}\right), \\
                \vec{\va}^{(l-0.5)}_i(t) &=\frac{1}{|\gN(i)|}\sum_{j\in\gN(i)}\vec{\vx}_{ij}^{(l-0.5)}(t)\phi_x{(\vm_{ij})}, 
                \\
                \vec{\vx}_i^{(l-0.5)}(t) &=\vec{\vx}_i^{(l-1)}(t)+\vec{\va}_i^{(l-0.5)}(t).
                \end{align}
            \ENDFOR
    
            \FOR{$i=1$ {\bfseries to} $N$}
                \STATE
                Equivariant Temporal Module (ETM)
                \begin{align}
                   \alpha_{i}^{(l-0.5)}(ts) &=\frac{\exp(\vq_i^{(l-0.5)}(t)^\top \vk_{i}^{(l-0.5)}(s))}{\sum_{s=0}^{t}\exp(\vq_i^{(l-0.5)}(t)^\top \vk_{i}^{(l-0.5)}(s))}, \\
                \vh_i^{(l)}(t) &= \vh_i^{(l-0.5)}(t) + \sum_{s=0}^{t}\alpha_{i}^{(l-0.5)}(ts)\vv_{i}^{(l-0.5)}(s),\\
                    \vec{\vx}_i^{(l)}(t) &=\vec{\vx}_i^{(l-0.5)}(t)+\sum_{s=0}^{t}\alpha_{i}^{(l-0.5)}(ts)\vec{\vx}_i^{(l-0.5)}(ts)\phi_x{(\vv_{i}^{(l-0.5)}(s))},
                \end{align}
                \STATE
                where
                \begin{align}
                    \vq_i^{(l-0.5)}(t)&=\phi_q\left(\vh_i^{(l-0.5)}(t)\right),\\
                    \vk_i^{(l-0.5)}(t)&=\phi_k\left(\vh_i^{(l-0.5)}(t)\right),\\
                    \vv_i^{(l-0.5)}(t)&=\phi_v\left(\vh_i^{(l-0.5)}(t)\right).
                \end{align}
        \ENDFOR
    \ENDFOR
    
       Equivariant linear pooling: 
        \begin{align}
        \vec{\vx}_i^*(T) = \hat{\mX}_i\vw+\vec{\vx}_i^{(L)}(T-1), 
        \end{align}
       where
        $
       \hat{\mX}_i=[\vec{\vx}_i^{(L)}(0)-\vec{\vx}_i^{(L)}(T-1),\vec{\vx}_i^{(L)}(1)-\vec{\vx}_i^{(L)}(T-1),\cdots,\vec{\vx}_i^{(L)}(T-2)-\vec{\vx}_i^{(L)}(T-1)].
       $
       
       \STATE {\bfseries 
       Output:}$\vec{\mX}^*(T)$
    \end{algorithmic}
\end{algorithm}

\section{Extended Models}
\label{sec: extended_model}

\subsection{Multi-channel ESTAG}
\label{sec: multi_channel_estag}
For proteins, there are four backbone atoms ( N, $\text{C}_\alpha$, C, O) in residue $i$, hence the above mentioned node position vector $\vx_i(t)\in\R^3$ is extended to a 4-channel position matrix $\mX_i(t) \in \R^{3\times 4}$. Particularly, we denote $\vec{\vx}^{\alpha}_i{(t)}$, a certain column from $\mX_i(t)$ as the position of $\text{C}_\alpha$ at time $t$.

\textbf{EDFT:}
\begin{align*}
\vec{\vf}_i{(k)}&=\sum_{t=0}^{T-1}e^{-i'\frac{2\pi}{T}kt}\;\left(\vec{\vx}_i^\alpha{(t)}-\overline{\vec{\vx}^\alpha{(t)}}\right),\\
\mA_{ij}(k)&=w_k(\vh_i)w_k(\vh_j)|\langle\vec{\vf}_i(k),\vec{\vf}_j(k)\rangle|,\\
\vc_i(k)&=w_k(\vh_i)\|\vec{\vf}_i(k)\|^2.
\end{align*}

\textbf{ESM:}

\begin{align*}
    \vm_{ij} &= \phi_m\left(\vh_i^{(l)}(t),\vh_j^{(l)}(t),\frac{(\vec{\mX}_{ij}^{(l)}(t))^\top\vec{\mX}_{ij}^{(l)}(t)}{\|(\vec{\mX}_{ij}^{(l)}(t))^\top\vec{\mX}_{ij}^{(l)}(t)\|_F}, \mA_{ij}\right),\\
    \vh_i^{(l+1)}(t) &= \phi_h\left(\vh_i^{(l)}(t),\vc_i(k), \sum_{j\neq i} \vm_{ij}\right),\\
    \vec{\mA}^{(l)}_i(t) &=\frac{1}{|\gN(i)|}\sum_{j\in\gN(i)}\vec{\mX}_{ij}^{(l)}(t)\phi_\mX{(\vm_{ij})},\\
    \vec{\mX}_i^{(l+1)}(t) &=\vec{\mX}_i^{(l)}(t)+\vec{\mA}_i^{(l)}(t).
\end{align*}

\textbf{ETM:}

\begin{align*}
    \alpha_{i}^{(l)}(ts) &=\frac{\exp(\vq_i^{(l)}(t)^\top \vk_{i}^{(l)}(s))}{\sum_{s=0}^{t}\exp(\vq_i^{(l)}(t)^\top \vk_{i}^{(l)}(s))}, \\
    \vh_i^{(l+1)}(t) &= \vh_i^{(l)}(t) + \sum_{s=0}^{t}\alpha_{i}^{(l)}(ts)\vv_{i}^{(l)}(s), \\
    \vec{\mX}_i^{(l+1)}(t)&=\vec{\mX}_i^{(l)}(t)+\sum_{s=0}^{t}\alpha_{i}^{(l)}(ts)\;\vec{\mX}_i^{(l)}(ts)\phi_\mX{(\vv_{i}^{(l)}(s))},
\end{align*}

where
\begin{align*}
    \vq_i^{(l)}(t)&=\phi_q\left(\vh_i^{(l)}(t)\right),\\
    \vk_i^{(l)}(t)&=\phi_k\left(\vh_i^{(l)}(t)\right),\\
    \vv_i^{(l)}(t)&=\phi_v\left(\vh_i^{(l)}(t)\right).
\end{align*}

\section{More Experimental Details and Results }

\subsection{More Details on MD17}
\label{sec:more_details_md17}

\textbf{Experiment setup and hyper-parameters.} We use the following hyper-parameters across all experimental evaluations: batch size 100, the number of epochs 500, weight decay $1\times 10^{-12}$, the number of layers 4 (we consider one ESTAG includes two layers, i.e. ESM and ETM), hidden dim 16, Adam optimizer with learning rate $5\times 10^{-3}$. We set the length of previous time series $L=10$ and the interval between two timestamps $\Delta t=10$. The number of  training, validation and testing sets are 500, 2000 and 2000, respectively.

\subsection{More Details on Protein}
\label{sec:more_details_protein}

\textbf{Experiment setup and hyper-parameters.} We use the following hyper-parameters across all experimental evaluations: batch size 100, the number of epochs 500, weight decay $1\times 10^{-12}$, the number of layers 4 (we consider one ESTAG includes two layers, i.e. ESM and ETM), hidden dim 16, Adam optimizer with learning rate $5\times 10^{-5}$. We divide the whole dataset into training, validation and testing sets by a ratio of 6:2:2, resulting in the numbers of the three sets are 2482, 827 and 827 respectively. The number of previous timestamps is $T=10$ and the interval between timestamps is $\Delta t=5$ frames.

\subsection{More Details on Motion}
\label{sec:more_details_motion}

\textbf{Experiment setup and hyper-parameters.} We use the following hyper-parameters across all experimental evaluations: batch size 100, the number of epochs 500, weight decay $1\times 10^{-12}$, the number of layers 4 (we consider one ESTAG includes two layers, i.e. ESM and ETM), hidden dim 16, Adam optimizer with learning rate $5\times 10^{-3}$. We set the length of previous time series $L=10$ and the interval between two timestamps $\Delta t=5$. 
The training/validation/testing sets sizes are 3000/800/800 respectively.

\subsection{Long-term recurrent forecasting}
We additionally explore the performance of the proposed method in long-term recurrent forecasting. The setting in our current experiment predicts only one frame at a time 
. Here we recurrently predict the future frames at time $T, T+\Delta T, T+2\Delta T, \cdots, T+10\Delta T$ (the value of $\Delta T$ follows the setting in the Section \ref{sec:exp}) in a rollout manner, where the currently-predicted frame will be used as the input for the next frame prediction, within a sliding window of length $T$. Note that the recurrent forecasting task is more challenging than the original scenario, and we need to make some extra improvements to prevent accumulated errors over time. Particularly for our method, we change the forward attention mechanism to be full attention mechanism (namely replacing $t$ with $T-1$ in the superscript of the summation of Eq.10 and Eq.12~\ref{eq:ETM}), as we observe that under the recurrent setting, forward attention tends to lead to biased predictions. The results are reported in Figure \ref{Fig: rollout}, where we verify that the rollout version of ESTAG delivers generally smaller MSE than all compared methods for all time steps.

\begin{figure*}[h]
\vskip -0.1in
    \centering
    \includegraphics[width=0.25\textwidth]{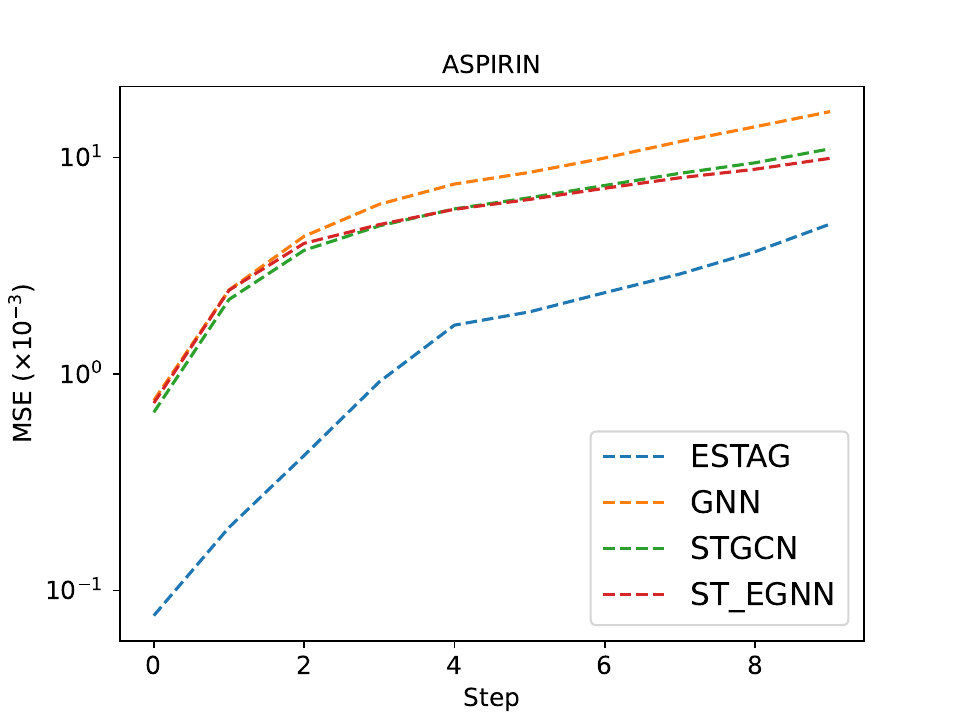}
    \hspace{-5mm}
    \includegraphics[width=0.25\textwidth]{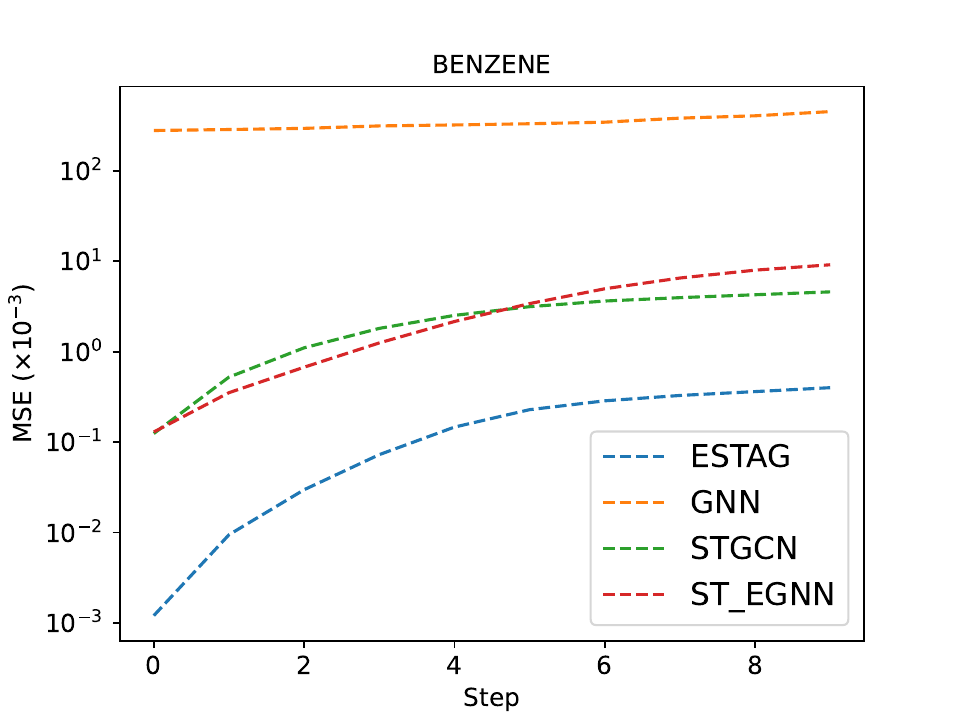}
    \hspace{-5mm}
    \includegraphics[width=0.25\textwidth]{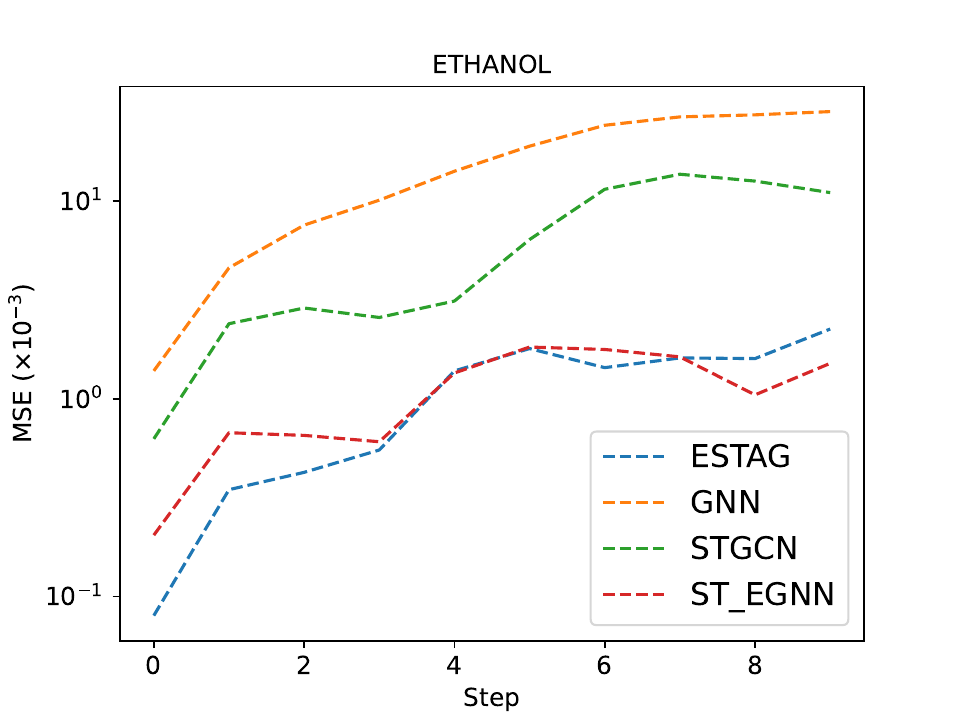}
    \hspace{-5mm}
    \includegraphics[width=0.25\textwidth]{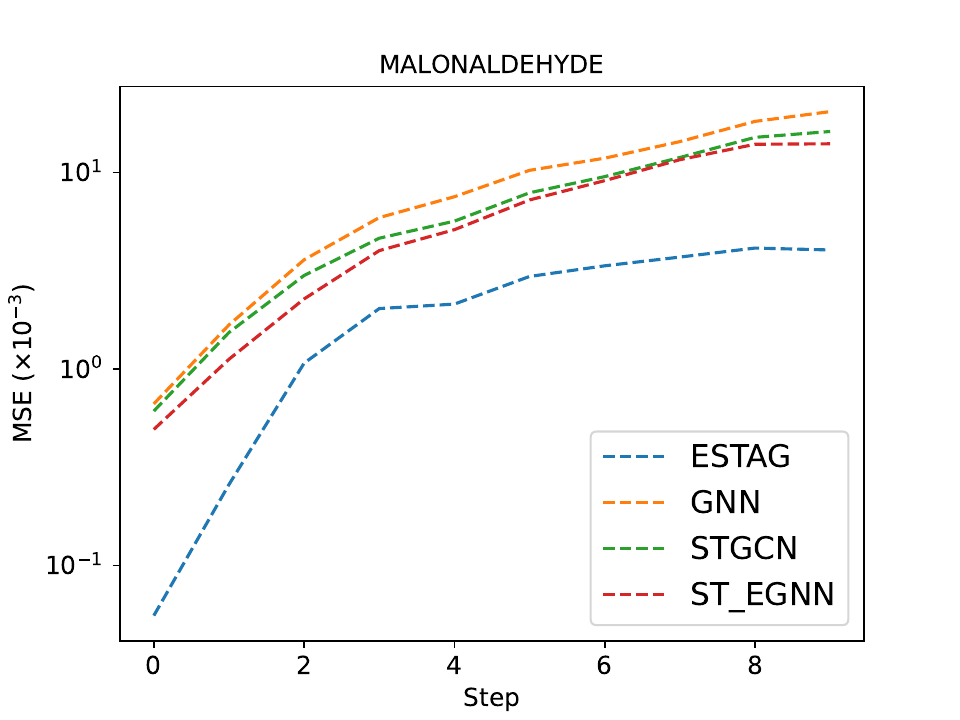} \\
    
    \includegraphics[width=0.25\textwidth]{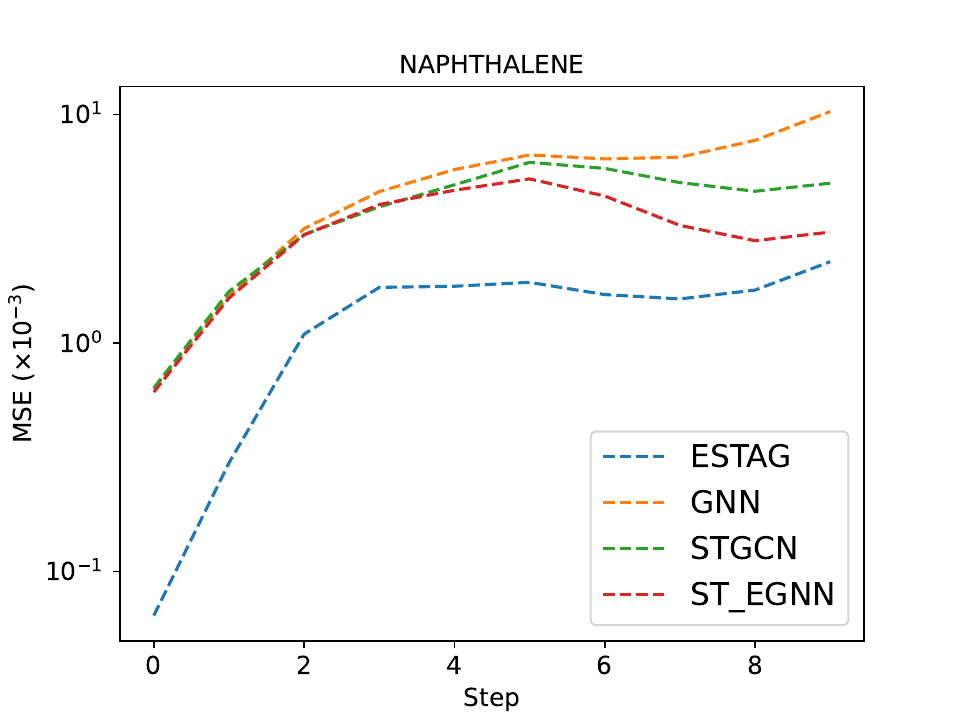}
    \hspace{-5mm}
    \includegraphics[width=0.25\textwidth]{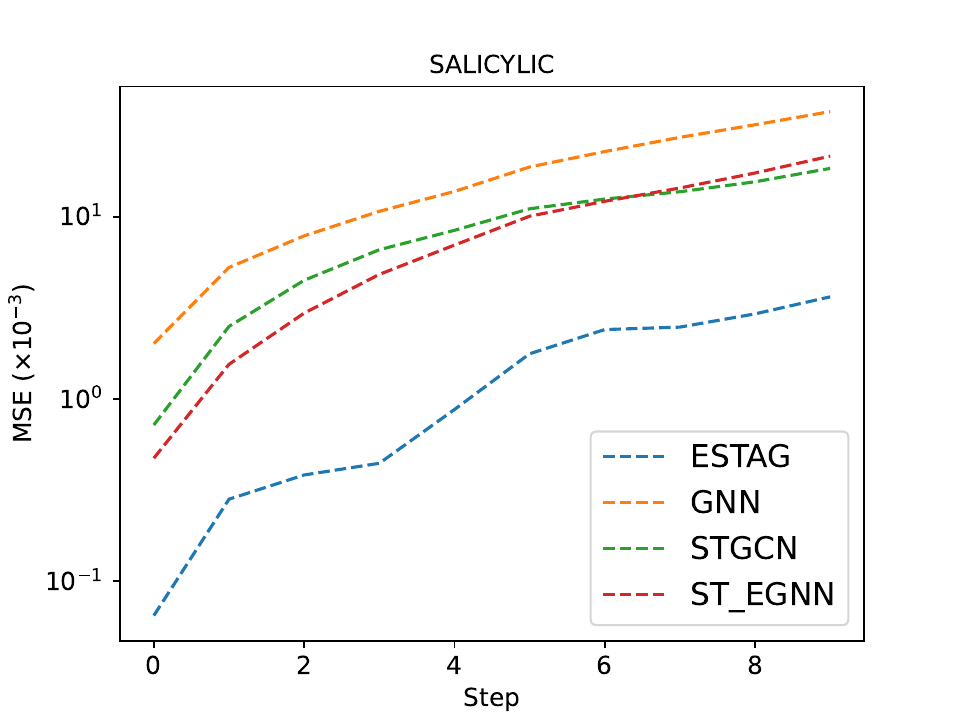}
    \hspace{-5mm}
    \includegraphics[width=0.25\textwidth]{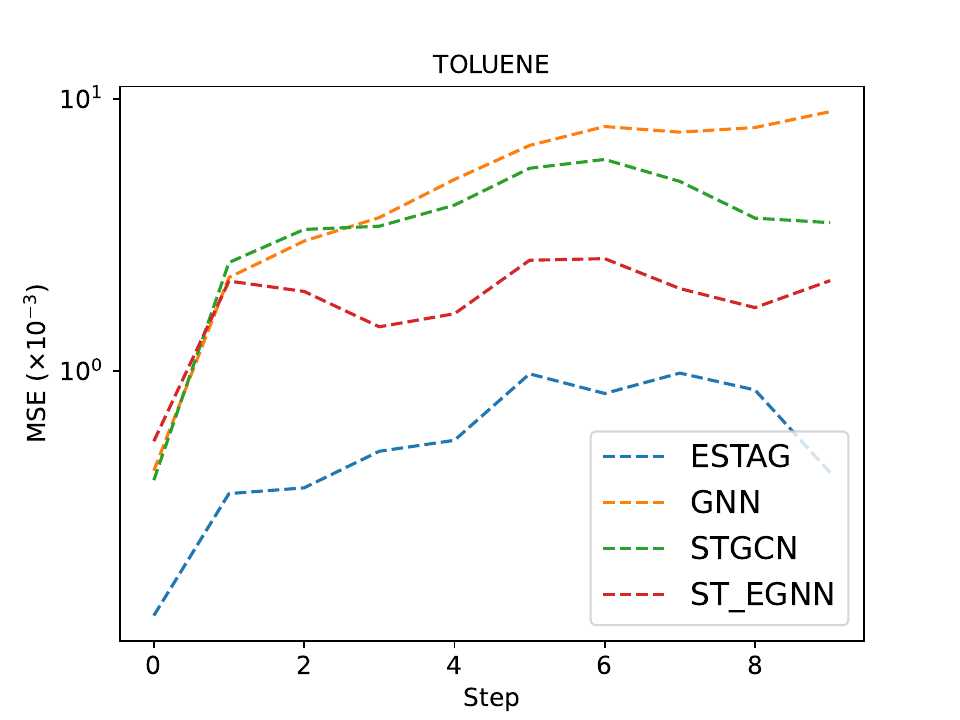}
    \hspace{-5mm}
    \includegraphics[width=0.25\textwidth]{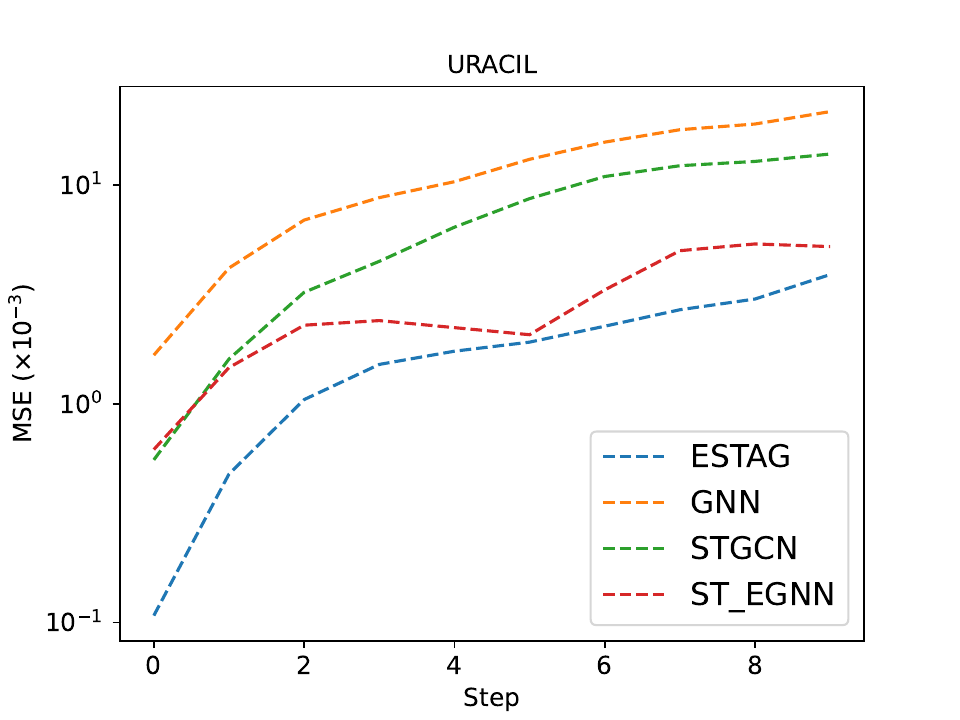}
    \caption{The rollout-MSE curves on 8 molecules in MD17. Our model generally achieves the lowest MSE.}
    \label{Fig: rollout}
\end{figure*}

\subsection{More visualization}
\label{sec:vis}

\textbf{Visualization of data.} To validate that the movement of MD17 molecules is periodical, in Figure \ref{move_period} we depict the trajectory of one randomly selected atom in each molecule, from timestep 127947 to timestep 327947. It is obvious that almost all molecules move with period.

\begin{figure*}[htbp]
    \centering
    \includegraphics[width=0.26\textwidth]{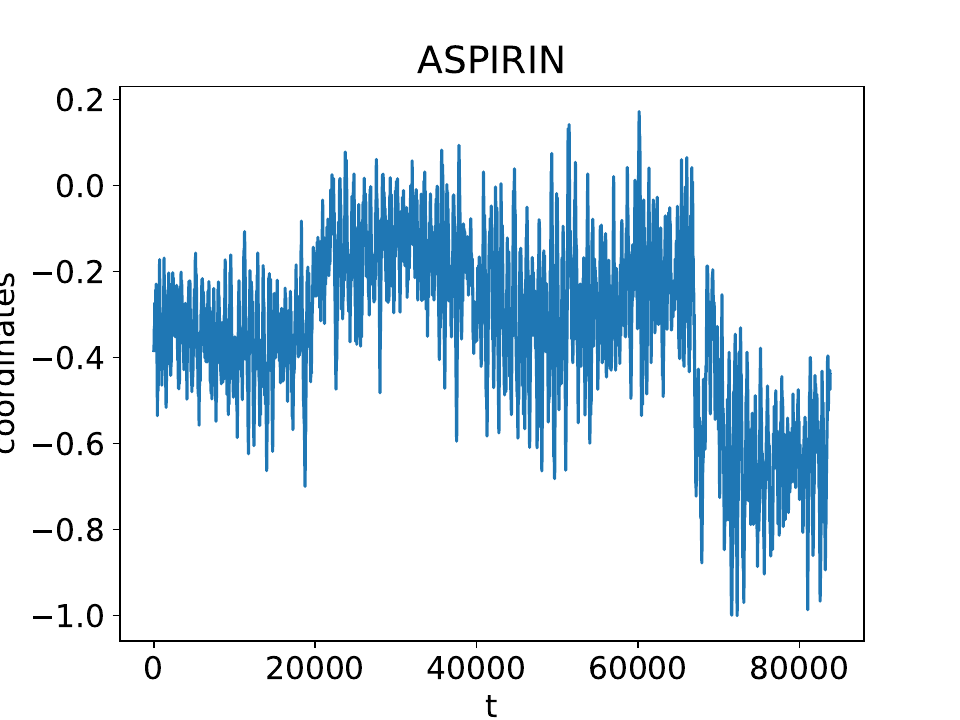}
    \hspace{-4mm}
    \includegraphics[width=0.26\textwidth]{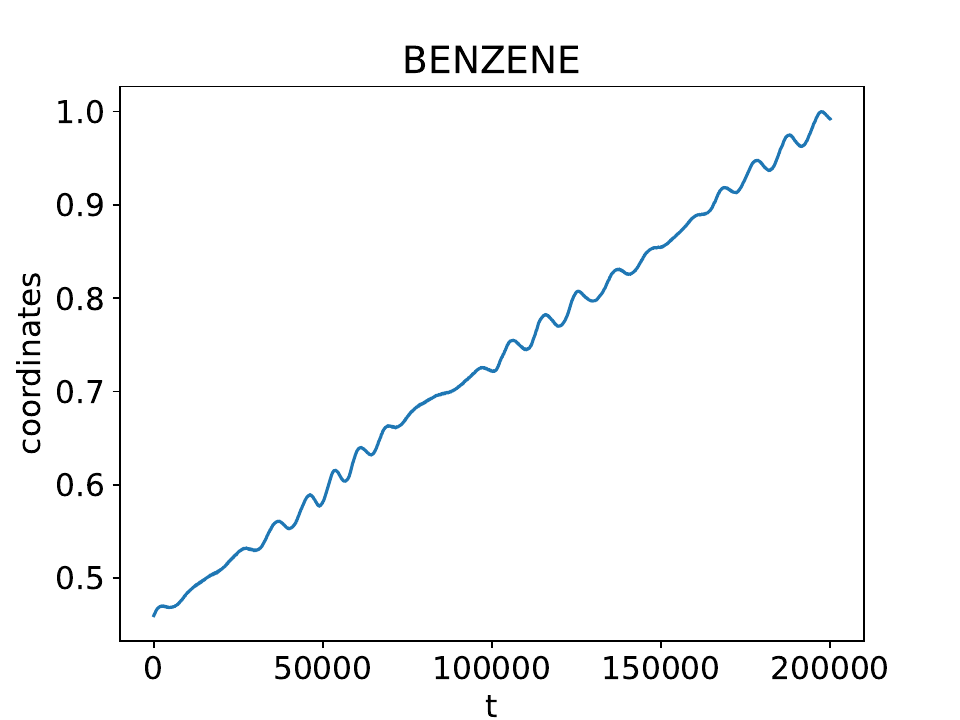}
    \hspace{-4mm}
    \includegraphics[width=0.26\textwidth]{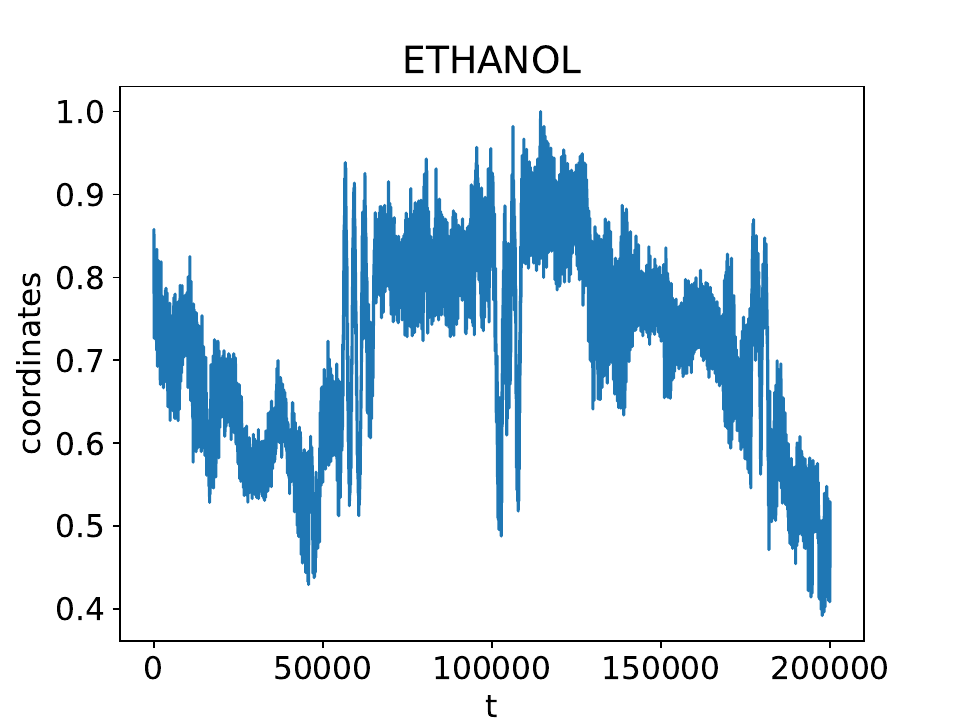}
    \hspace{-3mm}
    \includegraphics[width=0.26\textwidth]{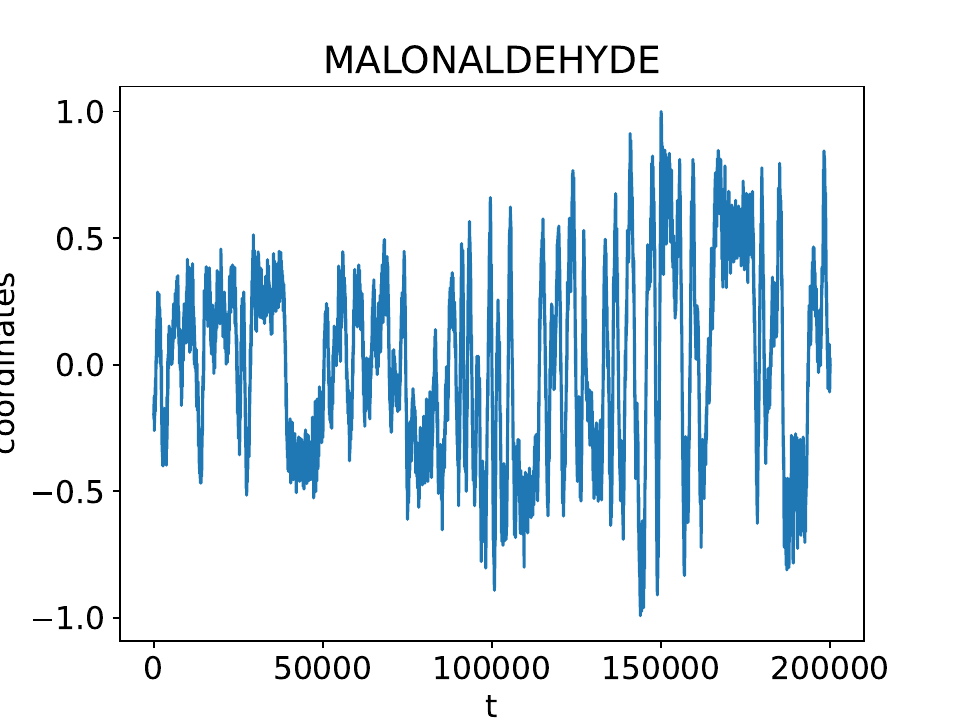} \\
    
    \includegraphics[width=0.26\textwidth]{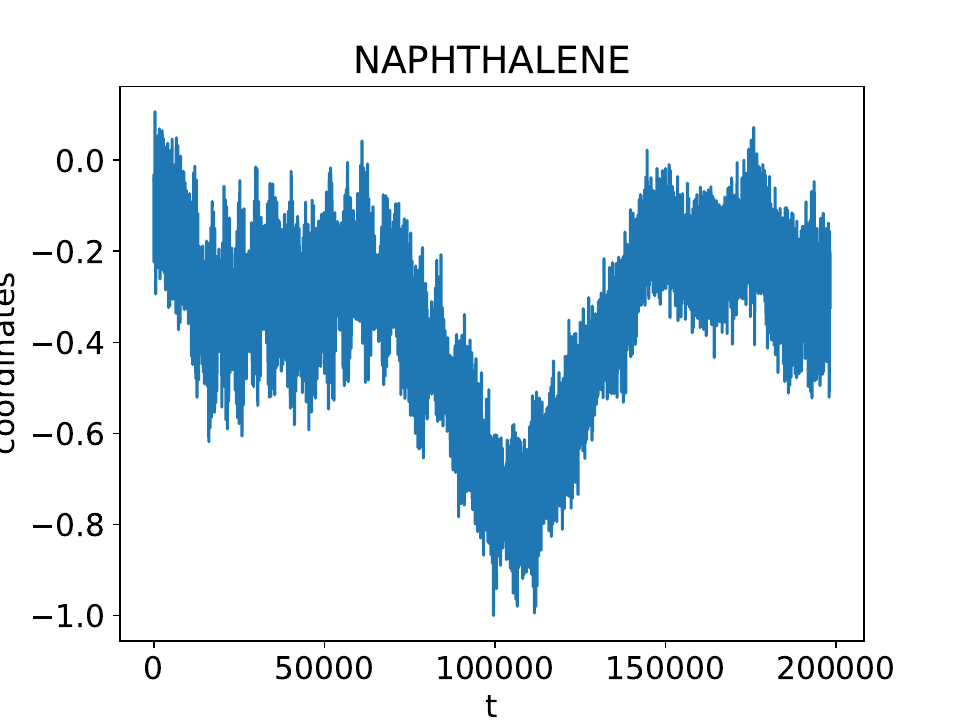}
    \hspace{-4mm}
    \includegraphics[width=0.26\textwidth]{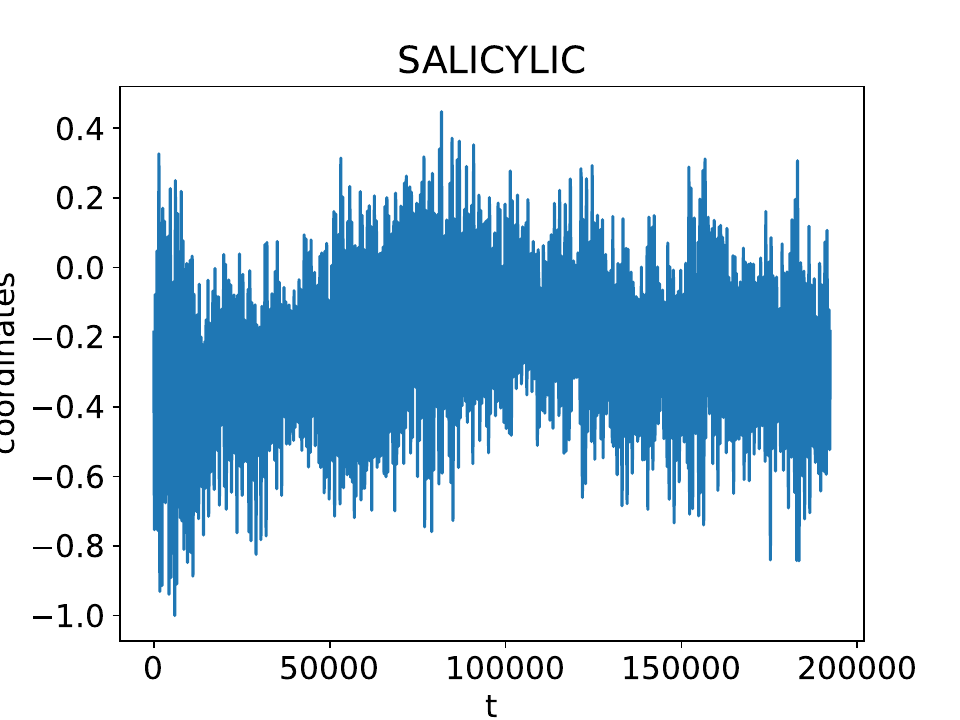}
    \hspace{-4mm}
    \includegraphics[width=0.26\textwidth]{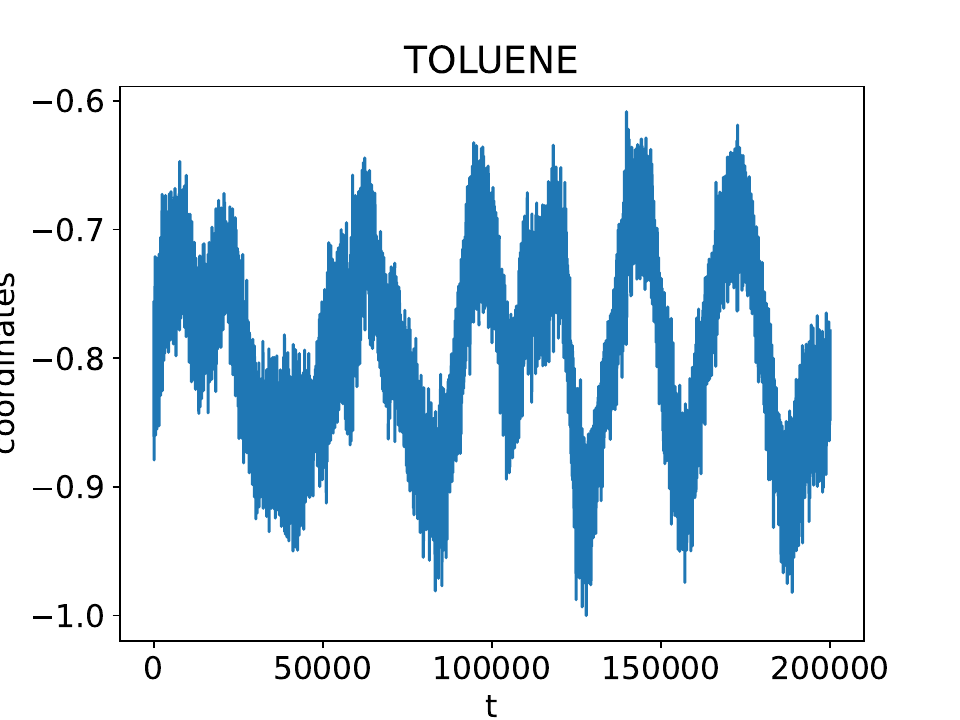}
    \hspace{-3mm}
    \includegraphics[width=0.26\textwidth]{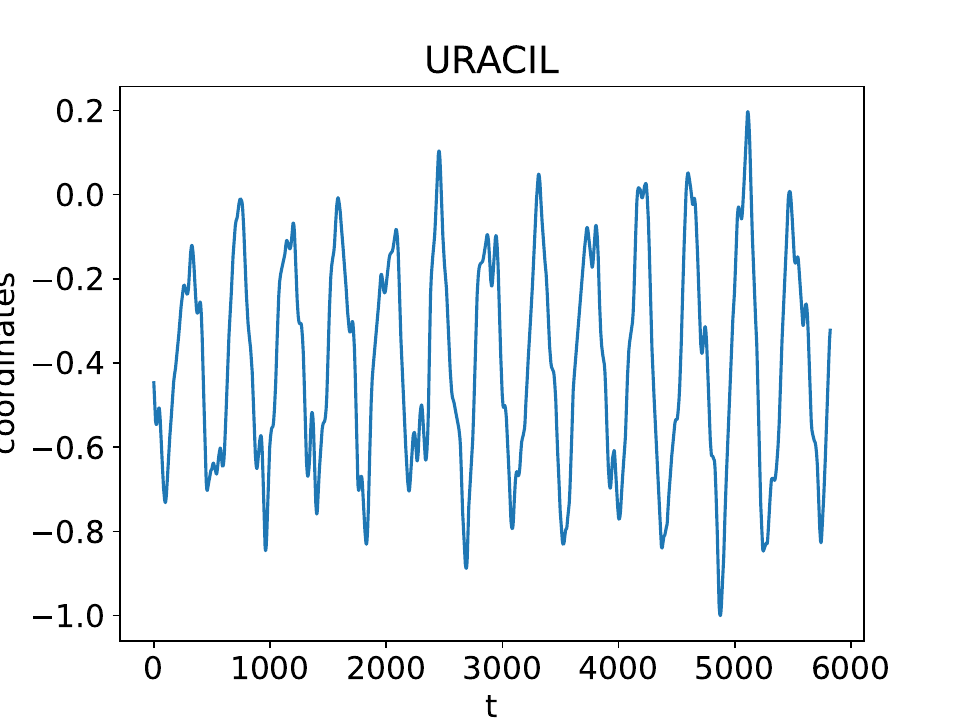}
    \vspace{-0.15in}
    \caption{The movement trajectory of 8 molecules in MD17.}
    \label{move_period}
\end{figure*}

\textbf{Visualization of model parameters.} Moreover, we display the attention map in ETM of MALONALDEHYDE's atom O and temporal pooling weights of all molecules, as shown in Figure \ref{vis_attn_w_loss}.

\begin{figure*}[htbp]
    \centering
    \includegraphics[width=0.95\textwidth]{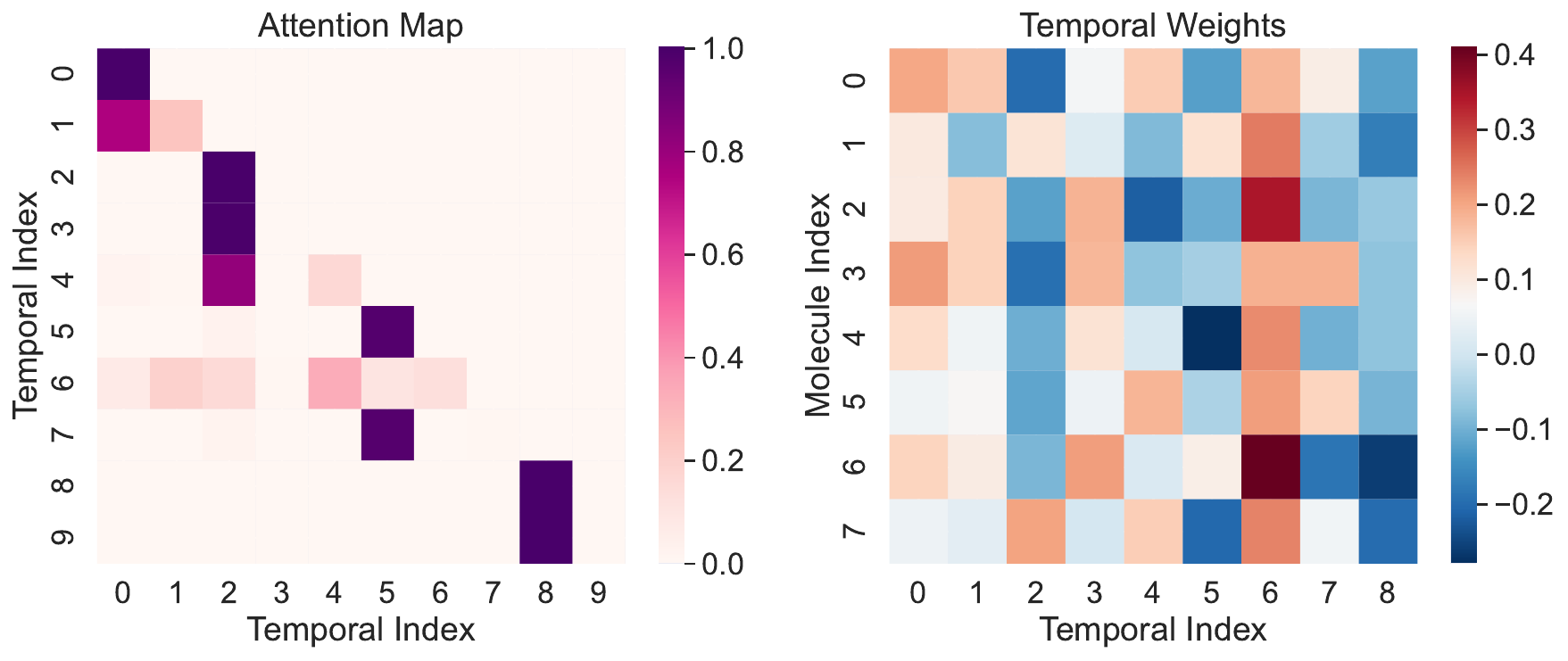}
    \caption{The visualization of temporal attention Map (left) and temporal pooling weights (right).}
    \label{vis_attn_w_loss}
    \vskip 0.1in
\end{figure*}

\textbf{Visualization of the prediction results on MD17 dataset.} For those non-obvious cases in Figure \ref{md17_vis}, the unclarity is mainly caused by 2D printing that is hard to depict 3D conformations such as angle deviation. For better visualization, we choose the molecule URACIL which is not clearly visualized, and render its 3D atom coordinates in Figure \ref{vis_uracil_coord}. ESTAG yields more accurate 3D prediction than ST\_EGNN, which is consistent with the MSE difference.

\begin{figure*}[htbp]
    \centering
    \includegraphics[width=0.95\textwidth]{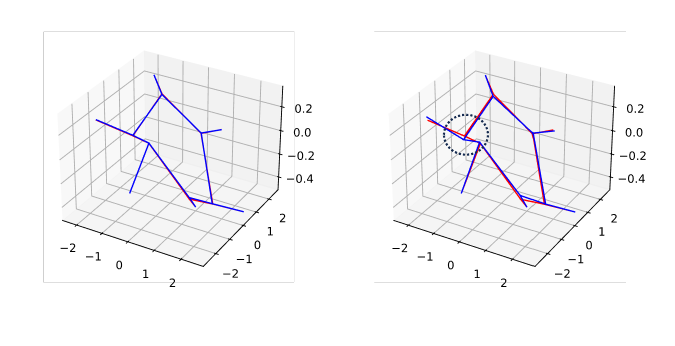}
    \vspace{-0.35in}

    \caption{3D visualization of URACIL for comparison between ESTAG (left, MSE=$5.983\times 10^{-5}$) and ST\_EGNN (right, MSE=$6.393\times 10^{-4}$). The ground truths are in \textcolor{red}{red} while the predicted states are in \textcolor{blue}{blue}. The molecule predicted by ESTAG is better aligned with the ground truth, particularly for the regions highlighted by circles.}
    \label{vis_uracil_coord}
    \vskip 0.1in
\end{figure*}

\textbf{Visualization of the prediction results on Protein dataset.} We further evaluate the predicted protein by visualizing it alongside the ground truth in Figure \ref{fig: protein_comparison}, using the ChimeraX software ~\cite{meng2023ucsf}. It is obvious that protein predicted by ESTAG aligns closer to the ground truth and ST\_EGNN fails to predict the alpha helix which we highlight in arrow.

\begin{figure*}[htbp]
    \centering
    \includegraphics[width=0.95\textwidth]{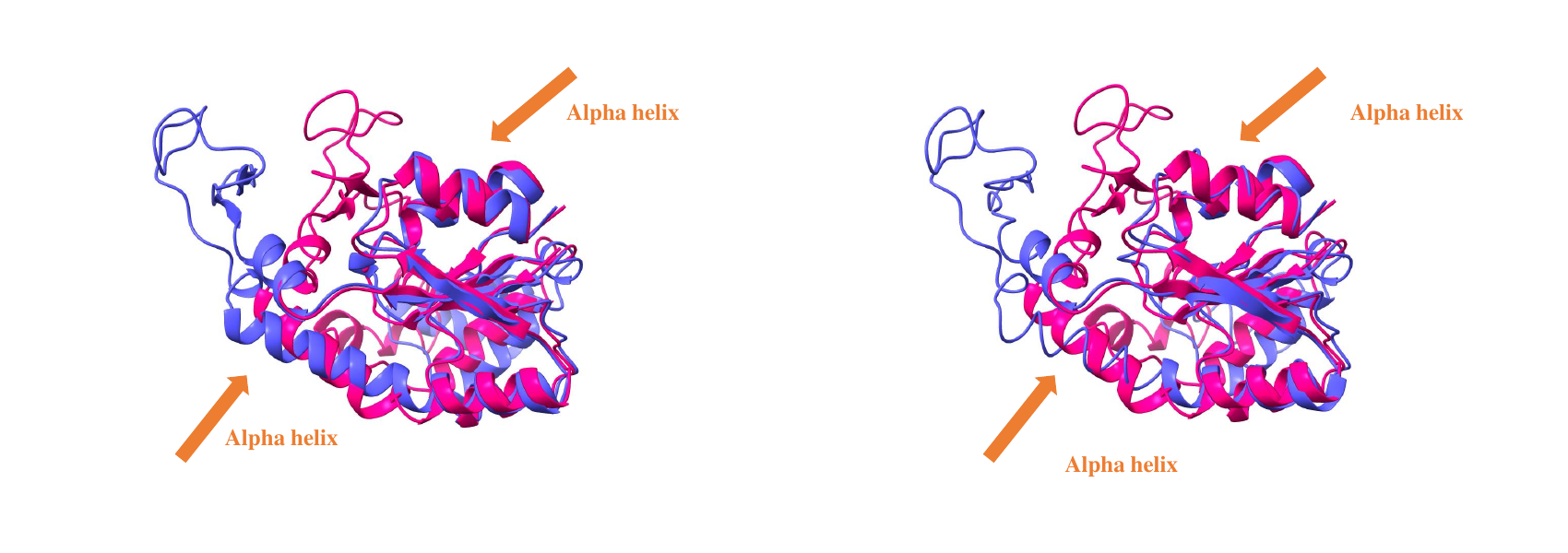}
    \vspace{-0.25in}
    \caption{Comparison on Protein dataset between ESTAG (left, MSE=1.014) and ST\_EGNN (right, MSE=1.388). The ground truths are in \textcolor{red}{red} while the predicted states are in \textcolor{blue}{blue}.}
    \label{fig: protein_comparison}
    \vskip 0.1in
\end{figure*}

\textbf{Visualization of the prediction results on Motion dataset.} Following the visualization manner used for molecules, we represent the entities from both walking and basketball motion scenarios within a 3D coordinate system, in Figure \ref{fig: walk_comparison} and Figure \ref{fig: basketball_comparison} respectively. It confirms the superiority of ESTAG, particularly in the significantly more challenging task of simulating basketball motion.

\begin{figure}[htbp]
\centering
\begin{minipage}[htbp]{0.9\textwidth}
\centering
\includegraphics[width=1.0\textwidth]{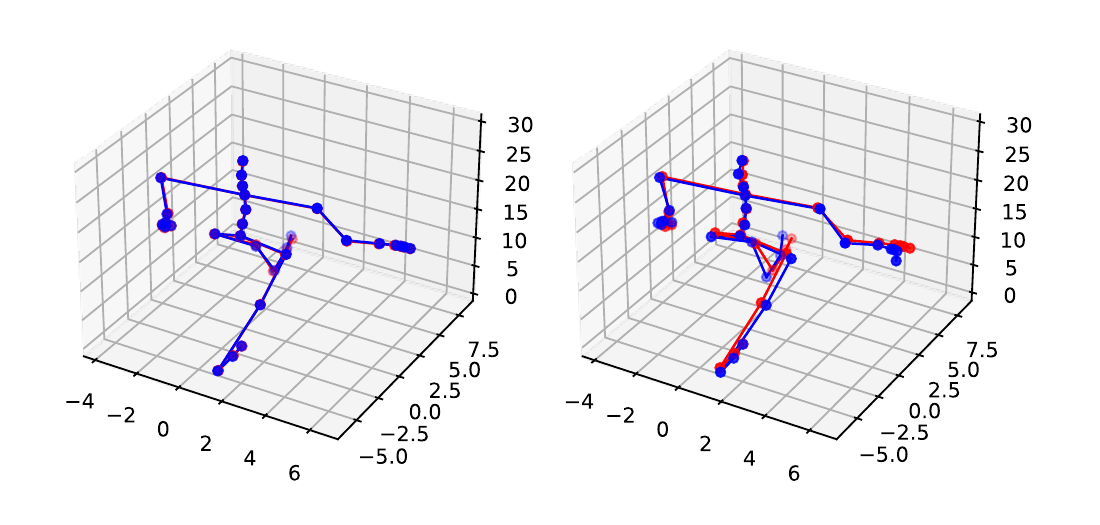}
\vspace{-0.9cm}
\caption{\small Comparison on Motion walk subject between ESTAG (left, MSE=0.0048) and ST\_EGNN (right, MSE=0.0811). The ground truths are in \textcolor{red}{red} while the predicted states are in \textcolor{blue}{blue}.}
\label{fig: walk_comparison}
\end{minipage}
\hspace{3.5mm}
\\
\begin{minipage}[htbp]{0.9\textwidth}
\centering
\includegraphics[width=1.0\textwidth]{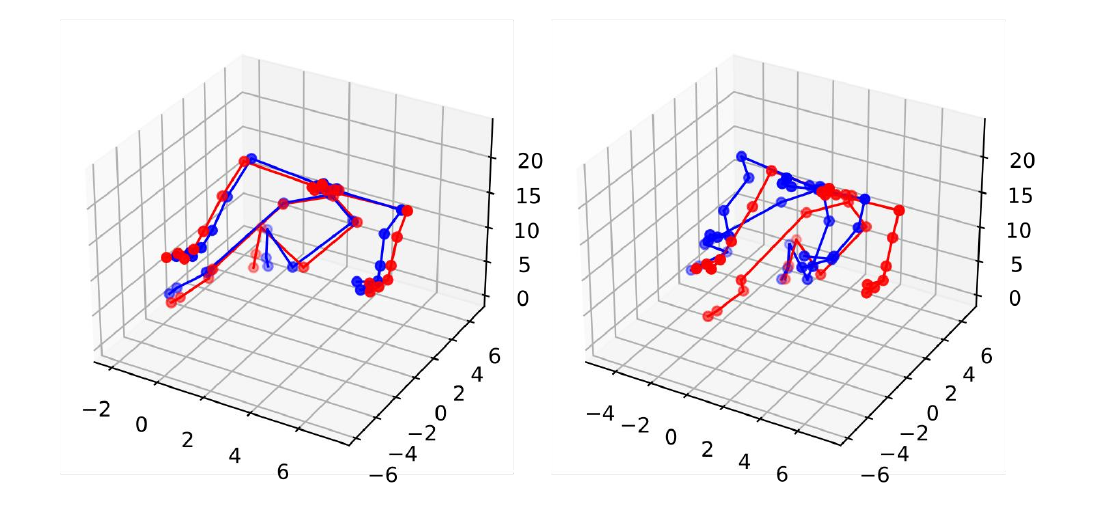}
\vspace{-0.9cm}
\caption{\small Comparison on Motion basketball subject between ESTAG (left, MSE=0.0749) and ST\_EGNN (right, MSE=2.6380). The ground truths are in \textcolor{red}{red} while the predicted states are in \textcolor{blue}{blue}.}
\label{fig: basketball_comparison}
\end{minipage}
\end{figure}

\section{More Ablation Studies}
\label{sec: more ablation studies}

\subsection{The impact of the number of past timestamps $T$}

We investigate effect of the number of previous timestamps $T$ on the $Ethanol$ dataset. We vary $T$ from 3, 5, 8, 10, 20, 30, 40, 50, 60, 70 and present the results in Table \ref{md17_ablation_T} and Figure \ref{Fig:ablation_past_time}. To balance the trade-off between efficiency and performance, we choose T=10 in our experiments.

\begin{table}[htbp]
\caption{ MSE on Ethanol $w.r.t.$ the number of previous timestamps $T$}
\vskip -0.1in
\label{md17_ablation_T}
\begin{center}
\begin{small}
\begin{tabular}{c|cccccccccc}
\toprule
 $T$&3&5&8&10&20&30&40&50&60&70
  \\
   \midrule
 MSE ($\times 10^{-4}$) &1.760	&1.395	&1.064	&0.990	&0.924	&1.017	&0.962	&1.009	&0.992	&1.019\\
\bottomrule
\end{tabular}
\end{small}
\end{center}
\end{table}

\begin{figure}[htbp]
\vskip -0.15in
\begin{center}
\includegraphics[scale=0.45]{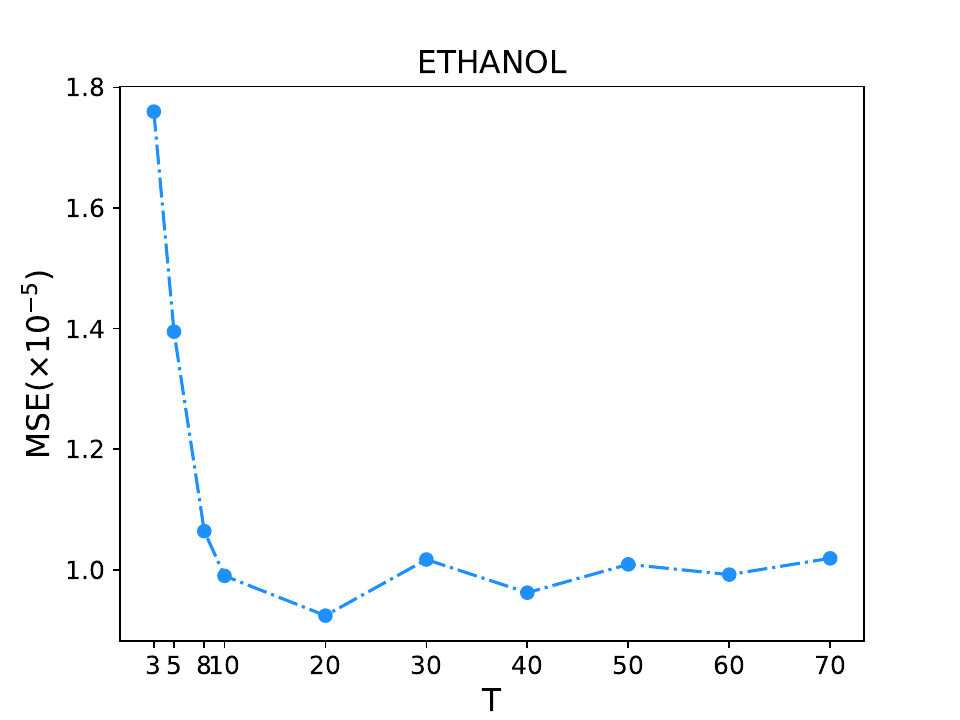}
\caption{MSE on Ethanol $w.r.t.$ the number of previous timestamps $T$ }
\label{Fig:ablation_past_time}
\end{center}
\end{figure}

\subsection{The impact of the learnable parameter $w_k$ in EDFT module}
In section \ref{sec: EDFT}, We have noticed that $w_k$ acts like spectral filters of the $k$-th frequency and enables us to select related frequency for the prediction. Here, we conduct ablation study on MD17 dataset to analyse how significant the parameter is. From the results in Table~\ref{md17_ablation_wk}, $w_k$ is indeed capable of improving the prediction of physical dynamics.

\begin{table*}[htbp]
\caption{Ablation studies ($\times 10^{-3}$) of learnable parameter $w_k$ on MD17 dataset. Results averaged across 3 runs.}
\label{md17_ablation_wk}
\vskip -0.3in
\begin{center}
\begin{small}
\tabcolsep=0.08cm
\begin{tabular}{lcccccccc}
\toprule
 & Aspirin  & Benzene & Ethanol & Malonaldehyde&Naphthalene&Salicylic&Toluene&Uracil
  \\
\midrule

ESTAG &\textbf{0.063}	&\textbf{0.003}	&\textbf{0.099}	&\textbf{0.101}	&\textbf{0.068}	&\textbf{0.047}	&\textbf{0.079}	&\textbf{0.066}\\
\midrule
w/o $w_k$ &0.071	&\textbf{0.003}	&0.102	&0.104	&0.081	&0.081	&\textbf{0.079}	&0.069\\
\bottomrule
\end{tabular}
\end{small}
\end{center}
\vskip -0.1in
\end{table*}

\subsection{Ablation studies on Protein and Motion datasets}
Here we conduct several ablations on Protein and Motion dataset similar to MD17 and show the results in Table \ref{Tab:ablation_protein} and Table \ref{Tab:ablation_motion}, respectively.

\begin{minipage}{\textwidth}
 \begin{minipage}[h!]{0.45\textwidth}
  \centering
    \makeatletter\def\@captype{table}\makeatother\caption{Ablation study on Protein dataset.}
    \vskip 0.1in
    \label{Tab:ablation_protein}
    \begin{tabular}{lc}
    \toprule
    Model & MSE
      \\
    \midrule
    ESTAG &	\textbf{1.471}	\\
    w/o EDFT &  1.490\\
    w/o Attention &  1.493 \\
    w/o Equivariance &2.011\\
    w/o Temporal &1.472 \\
    \bottomrule
    \end{tabular}
  \end{minipage}
  \begin{minipage}[h!]{0.45\textwidth}
   \centering
        \makeatletter\def\@captype{table}\makeatother\caption{Ablation study on Motion dataset. }
        \vskip 0.1in
        \label{Tab:ablation_motion}
         \begin{tabular}{lc}
        \toprule
        Model & MSE ($\times 10^{-1}$)
          \\
        \midrule
        ESTAG &	\textbf{0.040}	\\
        w/o EDFT &  0.053\\
        w/o Attention &  0.047 \\
        w/o Equivariance &  0.702\\
        w/o Temporal &0.044 \\
        \bottomrule
        \end{tabular}
   \end{minipage}
\end{minipage}

\section{Codes}
The codes of ESTAG are available at: \href{https://github.com/ManlioWu/ESTAG}{https://github.com/ManlioWu/ESTAG}.

\end{document}